\documentclass[10pt]{article}

\usepackage[margin=1.1in]{geometry}
\usepackage{amsmath,amssymb,amsfonts,amsthm}
\usepackage{mathtools}
\usepackage{array}
\usepackage{graphicx}
\usepackage{xcolor}
\usepackage{float}
\usepackage[authoryear,round]{natbib}
\AtBeginDocument{%
  \let\ExTRAoriginalbibitem\bibitem
  \renewcommand{\bibitem}{\normalfont\ExTRAoriginalbibitem}}
\usepackage{hyperref}
\usepackage{cleveref}
\usepackage{enumitem}
\usepackage{bm}
\usepackage{titlesec}
\usepackage{fancyhdr}
\usepackage{microtype}
\usepackage{parskip}
\usepackage{placeins}
\usepackage{flafter}

\usepackage{tikz}
\usepackage{pgfplots}
\pgfplotsset{compat=1.18}
\usetikzlibrary{arrows.meta,positioning,calc}
\definecolor{cpgray}{RGB}{95,103,112}
\definecolor{wcpblue}{RGB}{25,105,157}
\definecolor{tiltorange}{RGB}{202,90,36}
\pgfplotsset{
  resultaxis/.style={
    width=0.47\linewidth,height=4.6cm,
    scale only axis=false,
    axis lines=left,grid=major,grid style={black!8},
    tick label style={font=\scriptsize},
    label style={font=\small},title style={font=\small},
    legend style={font=\scriptsize,draw=none,fill=none},
    tick align=outside,major tick length=2pt,
    scaled ticks=false,
    every axis plot/.append style={line width=0.9pt,mark size=2pt}},
  cpplot/.style={cpgray,mark=square*},
  wcpplot/.style={wcpblue,mark=*},
  tiltplot/.style={tiltorange,mark=triangle*}
}

\newtheorem{proposition}{Proposition}

\hypersetup{
  colorlinks=true,
  linkcolor=blue!70!black,
  citecolor=blue!50!black,
  urlcolor=blue!60!black,
  pdftitle={Conformal Prediction under Exponential-Tilt Joint Shift},
  pdfauthor={Seungjin Choi}
}

\providecommand{\E}{\mathbb{E}}

\newcommand{\R}{\mathbb R}
\newcommand{\Pp}{\mathbb P}
\newcommand{\ind}{\mathbf 1}
\newcommand{\Dtr}{\mathcal D_{\mathrm{tr}}}
\newcommand{\Dshift}{\mathcal D_{\mathrm{shift}}}
\newcommand{\Dcal}{\mathcal D_{\mathrm{cal}}}
\newcommand{\C}{\mathcal C}
\newcommand{\X}{\mathcal X}
\newcommand{\Y}{\mathcal Y}
\newcommand{\TV}{d_{\mathrm{TV}}}

\titleformat{\section}{\large\bfseries\color{blue!60!black}}{\thesection}{1em}{}[\titlerule]
\titleformat{\subsection}{\normalsize\bfseries\color{blue!40!black}}{\thesubsection}{1em}{}
\titleformat{\subsubsection}{\normalsize\itshape\bfseries}{\thesubsubsection}{1em}{}

\begin{document}

\begin{center}
  {\LARGE\bfseries \textsf{Conformal Prediction under Exponential-Tilt Joint Shift}}\\[2em]
  {\large Seungjin Choi}\\[1em]
  {\normalsize CROID Research and aSSIST University, Seoul, Korea}
  \end{center}

\vspace{0.5em}
\noindent\rule{\linewidth}{1.5pt}
\vspace{0.5em}

\begin{abstract}
Conformal prediction can lose coverage when the data distribution changes
after deployment. We study adaptation using labeled source data and
unlabeled target inputs, allowing both the input distribution and its
relationship with outcomes to change. We use Exponential Tilt
Reweighting Alignment (ExTRA), introduced for classification by
\citet{MaityS2023iclr}, to estimate structured distribution shifts.
We compare using its estimated weights in conformal calibration with
additionally tilting the source predictive distribution. Shared learned
predictors, estimated weights, calibration samples, and test observations
isolate the effect of tilting. Existing theory gives both procedures
target coverage with true weights and a common coverage bound with
estimated weights. Identification calculations and an analysis of how
scoring interacts with weight estimation error help explain why their
performance can nevertheless differ.
In a synthetic regression setting where the assumed models match
the data-generating process and target inputs are informative about
the shift, tilting reduces mean set length by about $30\%$ relative
to weighting alone, with both methods attaining coverage near nominal.
Tilting can instead cause substantial coverage losses in synthetic
classification and in regression when target inputs provide little
information about the response shift. Real-data experiments also show no consistent benefit.
Good coverage from weighted calibration alone does not ensure that
adding predictive tilting will preserve coverage. Deciding when to
apply this additional adjustment using only source labels and target
inputs remains an open problem.
\end{abstract}

\section{Introduction}
\label{sec:introduction}

Distribution changes between training and deployment can undermine
predictive accuracy and the reliability of uncertainty estimates
\citep{KohPW2021icml,OvadiaY2019neurips}. For prediction sets, the
concern is whether they continue to contain the true outcome at the
intended rate. We study this problem using labeled source data and
unlabeled target inputs. These observations alone cannot identify an
arbitrary target joint distribution, so adaptation requires assumptions
linking the two populations \citep{Ben-DavidS2010aistats}.

Let $P$ and $Q$ denote the source and target distributions of $(X,Y)$.
Standard split conformal prediction provides marginal coverage under
exchangeability \citep{VovkV2005book,LeiJ2018jasa,AngelopoulosAN2023ftml}.
When $Q\ll P$, weighted conformal prediction extends this guarantee
using the true target-to-source density ratio $dQ/dP$
\citep{TibshiraniR2019neurips}. This ratio may depend on both the input
and the response. The practical difficulty is estimating it without
observing target responses.

Two common shift assumptions simplify this estimation problem.
Under \emph{covariate shift}, $Q_{Y\mid X}=P_{Y\mid X}$, so the joint
ratio reduces to the input ratio $dQ_X/dP_X$. This is the basis of
classical importance weighting
\citep{ShimodairaH2000jspi,SugiyamaM2007jmlr_a}.
Under \emph{label shift}, $Q_{X\mid Y}=P_{X\mid Y}$, so the ratio
depends only on the response. For classification, target class
proportions can be estimated under suitable identification conditions
through likelihood adjustment
\citep{SaerensM2002neco,AlexandariAM2020icml}
or confusion-matrix matching \citep{LiptonZ2018icml}.
Regularization and source probability calibration also affect estimation
accuracy \citep{AzizzadenesheliK2019iclr,GargS2020neurips}.
\citet{PodkopaevA2021uai} study conformal prediction under label shift.
Our earlier work applies weighted conformal prediction under label
shift to molecular property prediction
\citep{LeeHS2025reliableML}.
For continuous responses, the same invariance is also called
\emph{target shift} \citep{YiM2025arxiv}.

More general joint shifts require other structural restrictions
\citep{ZhangK2013icml}. Exponential Tilt Reweighting Alignment (ExTRA),
introduced for classification by \citet{MaityS2023iclr}, models the joint
ratio through exponential tilts, building on classical density ratio
models
\citep{AndersonJ1979biometrika,QinJ1998biometrika,FokianosK2001technometrics}.
Each proposed tilt implies an input distribution after averaging over
the source conditional response distribution. ExTRA estimates the tilt
by matching this induced input distribution to the observed target
inputs. For regression, conditional integration replaces the sum over
classes. Whether target inputs identify the joint ratio depends on the
chosen features and the source distribution.

We compare two ways of using the fitted ratio. ExTRA-WCP uses it in
weighted calibration while retaining the source score.
ExTRA-WCP-T uses the same weighted calibration but also tilts the source
predictive distribution before computing the score. The additional
adjustment changes which responses the predictor regards as plausible
and therefore changes the shape of the prediction set.

This predictive adjustment has precedents. Under label shift, it
multiplies the source class probabilities by target-to-source class-prior
ratios and renormalizes them \citep{SaerensM2002neco}.
Predictive tilting under label shift also appears in our conformal Bayes work
\citep{Choi2026eiml,Choi2026arxiv_scbc} and in a recent application
to molecular property prediction with full conformal Bayes \citep{KimJY2026copa}.
CPUTS combines predictive adjustment with weighted calibration under
regression target shift \citep{YiM2025arxiv}. Here we consider ratios
that may depend jointly on $(X,Y)$ and source predictors that need
not be Bayesian.

Weighting and tilting affect different parts of the construction.
Weighting changes how calibration observations and the candidate
response enter the conformal rank. Tilting changes their scores.
A predictive distribution better aligned with the target can yield
smaller sets at the required coverage. With an estimated ratio, however,
changing the score can also move the set's misses toward regions where
the ratio underestimates target probability. The two scores share a
coverage bound determined by ratio error, but their actual coverage
can differ substantially.

Our contribution is a controlled comparison that holds the learned
source predictor, fitted ratio, calibration sample, and test observations
fixed to isolate the effect of tilting. Existing coverage theory and
an analysis of score-dependent errors support the comparison.
Synthetic regression demonstrates substantial length gains with
coverage near nominal, while synthetic classification, regression with
weak information about the response shift, and real-data experiments
reveal coverage losses. A two-stage baseline for separable shifts and
an experiment with an input-dependent response shift help distinguish
ratio estimation error from restrictions of the shift model.
Identification calculations explain which shifts target inputs can
distinguish within the experimental families.

\section{ExTRA and identification}
\label{sec:extra}

\subsection{Data and shift model}
\label{subsec:tilt-model}

The four mutually independent samples, each containing independent
observations, are
\begin{align*}
 \Dtr&=\{(X_i^{\mathrm{tr}},Y_i^{\mathrm{tr}})\}_{i=1}^{n_{\mathrm{tr}}}
       \sim P^{n_{\mathrm{tr}}}, &
 \Dshift^P&=\{(X_i^P,Y_i^P)\}_{i=1}^{n_{\mathrm{shift}}}
       \sim P^{n_{\mathrm{shift}}},\\
 \Dshift^Q&=\{X_j^Q\}_{j=1}^m\sim Q_X^m, &
 \Dcal&=\{(X_i,Y_i)\}_{i=1}^n\sim P^n.
\end{align*}
We use $\Dtr$ to fit the source conditional model and
$(\Dshift^P,\Dshift^Q)$ to estimate the joint density ratio.
The sample $\Dcal$ is reserved for conformal calibration.
All fitted models, score functions, weights, and tuning choices
are fixed before calibration and testing. For an independent
target pair $(X_{n+1},Y_{n+1})\sim Q$, we observe only $X_{n+1}$
and seek a prediction set $\C(X_{n+1})$ that contains $Y_{n+1}$
with marginal probability at least $1-\alpha$, where
$\alpha\in(0,1)$.

Assume $Q\ll P$ and write $w^\star=dQ/dP$. Given features
$\phi\colon\X\times\Y\to\R^d$, the exponential tilt model is
\begin{equation}
 h_\beta(x,y)=e^{\beta^\top\phi(x,y)},\qquad
 w_\beta(x,y)=\frac{h_\beta(x,y)}{Z_P(\beta)},\qquad
 Z_P(\beta)=\E_P h_\beta(X,Y).
 \label{eq:ratio-model}
\end{equation}
We restrict $\beta\in\mathcal B$ to $0<Z_P(\beta)<\infty$ and define
$dQ_\beta=w_\beta\,dP$. Features may be learned from $\Dtr$, then
frozen. A common constant in the log tilt cancels under normalization
and is removed when present. A nonconstant input term remains in the
joint ratio.

For classification with $K$ classes, let
$T\colon\X\to\R^p$ be an input feature map. We construct
$\phi(x,y)$ by stacking the class indicators
$\mathbf{1}\{y=k\}$ and their products
$\mathbf{1}\{y=k\}T(x)$, for $k=1,\ldots,K$.
With corresponding coefficients $\alpha_k$ and $\theta_k$,
\[
 \beta^\top\phi(x,y)
 =\sum_{k=1}^K \mathbf{1}\{y=k\}
   \bigl(\alpha_k+\theta_k^\top T(x)\bigr).
\]
Thus, for a candidate class $k$, $h_\beta(x,k)=e^{\theta_k^\top T(x)+\alpha_k}$,
recovering the class-specific exponential tilts of
\citet{MaityS2023iclr}.
For regression, we choose features of the continuous response and its
interactions with the input. Input-only features describe covariate
shift and response-only features describe label or target shift.
Correct specification means $w^\star=w_{\beta^\star}$ for some
$\beta^\star\in\mathcal B$. The identification calculations and
synthetic designs use this assumption. The coverage bound below also
allows misspecification.

\subsection{Marginal matching}
\label{subsec:extra-fit}

The joint tilt model specifies
\[
 Q_\beta(dx,dy)
 =\frac{h_\beta(x,y)}{Z_P(\beta)}P(dx,dy).
\]
To connect this model to the observed target inputs, we integrate out
the response. Using $P(dx,dy)=P_X(dx)P(dy\mid x)$, where $P(dy\mid x)$
is the source conditional distribution of $Y$ given $X=x$, gives
\[
 Q_{\beta,X}(dx)
 =\int_{\Y}Q_\beta(dx,dy)
 =\frac{P_X(dx)}{Z_P(\beta)}
   \int_{\Y}h_\beta(x,y)P(dy\mid x).
\]
Define the conditional moment
\begin{equation}
 M_\beta(x)
 =\int_{\Y}h_\beta(x,y)P(dy\mid x)
 =\E_P[h_\beta(X,Y)\mid X=x].
 \label{eq:conditional-moment-population}
\end{equation}
The induced input ratio is
\begin{equation}
 \frac{dQ_{\beta,X}}{dP_X}(x)
 =\frac{M_\beta(x)}{Z_P(\beta)},
 \qquad
 Z_P(\beta)=\E_{P_X}M_\beta(X).
 \label{eq:induced-input-ratio}
\end{equation}
Thus, averaging the joint tilt over the source conditional distribution
produces an input weight $M_\beta(x)$. Normalizing these weights
determines the input distribution implied by the proposed joint shift.

ExTRA estimates $\beta$ by matching this induced distribution to
the observed target input distribution $Q_X$.
Its population objective is
\begin{equation}
 L(\beta)
 =\E_{Q_X} \left[ \log\frac{dQ_{\beta,X}}{dP_X}(X) \right]
 =\E_{Q_X} \left[ \log M_\beta(X) \right]-\log Z_P(\beta).
 \label{eq:population-objective}
\end{equation}
When the quantities are finite,
\[
 \operatorname{KL}(Q_X\Vert Q_{\beta,X})
 =\operatorname{KL}(Q_X\Vert P_X)-L(\beta).
\]
Maximizing $L$ therefore minimizes the discrepancy between the
induced and observed input distributions, without estimating a
target density. Under correct specification, every parameter
reproducing $Q_X$ maximizes this criterion. Different joint tilts
may nevertheless induce the same input marginal. Whether matching
$Q_X$ determines the joint ratio is the identification question
considered below.

In practice, let $\widehat P(dy\mid x)$ denote the source conditional
distribution fitted on $\Dtr$, with density or mass function
$\widehat p_P(y\mid x)$. We approximate $M_\beta(x)$ by
\begin{equation}
 \widehat M_\beta(x)
 =\int_{\Y}h_\beta(x,y)\widehat P(dy\mid x).
 \label{eq:conditional-moment}
\end{equation}
For classification, this integral is a sum over labels, weighted
by their fitted source probabilities. For regression, it can be
evaluated analytically, by quadrature, or by sampling from the
fitted conditional distribution.

The target shift sample $\Dshift^Q$ estimates the expectation in
$L(\beta)$, while the labeled source shift sample $\Dshift^P$
estimates $Z_P(\beta)=\E_P h_\beta(X,Y)$.
With a ridge penalty $(\lambda/2)\|\beta\|_2^2$, where
$\lambda\geq0$, we estimate $\beta$ by maximizing
\begin{equation}
 \widehat L(\beta)
 =\frac1m\sum_{j=1}^m\log\widehat M_\beta(X_j^Q)
 -\log\!\left\{
   \frac1{n_{\mathrm{shift}}}
   \sum_{i=1}^{n_{\mathrm{shift}}}h_\beta(X_i^P,Y_i^P)
 \right\}
 -\frac{\lambda}{2}\|\beta\|_2^2.
 \label{eq:extra-objective}
\end{equation}
We maximize $\widehat L(\beta)$ numerically over $\mathcal B$,
subject to prespecified coefficient bounds and finite positive
fitted moments. Appendix~\ref{app:implementation} reports the
bounds and initialization choices for each experiment.
Numerical convergence does not guarantee a global maximum.

The fitted conditional moments and empirical normalizer need not
satisfy the population identity $Z_P(\beta)=\E_{P_X}M_\beta(X)$.
The moments use the learned source
conditional distribution, while the normalizer uses observed
source pairs. We therefore treat $\widehat L$ as a penalized
plug-in approximation to the population objective.
Source-model error, sampling variability, regularization, and
numerical optimization can all affect $\widehat\beta$ and hence
the fitted joint ratio. We assess this error in the synthetic
experiments. We do not derive a general regression estimation rate.

\subsection{What target inputs identify}
\label{subsec:identification}

Matching the target input distribution does not necessarily determine
the joint shift. Identification asks whether that match uniquely
determines the joint ratio within the assumed family.
For fixed $P$, the ratio is identified at $\beta^\star$ if, for
every $\beta\in\mathcal B$,
\begin{equation}
 Q_{\beta,X}=Q_{\beta^\star,X}
 \quad\Longrightarrow\quad
 w_\beta=w_{\beta^\star}
 \quad P\text{ almost surely}.
 \label{eq:identification-condition}
\end{equation}
By \eqref{eq:induced-input-ratio}, equality of the input marginals
is equivalent to
\[
 M_\beta(x)=C M_{\beta^\star}(x)
 \quad P_X\text{ almost surely},
 \qquad
 C=\frac{Z_P(\beta)}{Z_P(\beta^\star)}.
\]
The question is whether this proportionality also determines the
normalized joint ratio. \citet[Section~4.1]{MaityS2023iclr}
discuss label-switching ambiguities and sufficient conditions
for identification in classification.

Under \emph{nonidentification}, different joint ratios produce
exactly the same input distribution. Even unlimited labeled source
data and unlabeled target inputs cannot distinguish them.
A penalty can select one solution, but cannot supply the missing
information. For example, when $X$ and $Y$ are independent under
$P$, a response-only tilt leaves the input distribution unchanged.
Appendix~\ref{app:identification} shows that ambiguity can also
arise when source inputs are informative about the response.

We use \emph{weak identification} to describe settings where
substantially different joint ratios produce input distributions
that are difficult to distinguish at the available sample sizes.
Here the ratio may be identified in the population but estimated
poorly from finite samples. We therefore distinguish three
questions.
\begin{enumerate}
\item \emph{Shift representation.}
Does the chosen family contain the true joint ratio?
\item \emph{Identification.}
Do target inputs uniquely determine the joint ratio within
that family?
\item \emph{Estimation accuracy.}
How accurately does the fitted estimator recover the joint
ratio from the available samples?
\end{enumerate}
The experiments examine how each issue affects weighted
calibration and predictive tilting.

\section{Weighted calibration and predictive tilting}
\label{sec:methods}

ExTRA-WCP and ExTRA-WCP-T use the same fitted joint ratio to assign weights to
the calibration pairs and to each candidate pair $(x,y)$ when computing the conformal rank.
They differ in how they compute scores. ExTRA-WCP retains
the source score, while ExTRA-WCP-T first tilts the source predictive
distribution and then computes the score from that adjusted
distribution. Figure~\ref{fig:schematic} shows the shared fitting
steps and the additional predictive adjustment in ExTRA-WCP-T.

\subsection{Two scores from the same fitted model}

Let $p_0(y\mid x)$ denote the source conditional density or mass
function fitted on $\Dtr$. A scoring rule converts this predictive
distribution into a score $S_0(x,y)$, with larger values indicating
less conformity. Examples include negative log density and adaptive
prediction set (APS) scores \citep{RomanoY2020neurips}.
ExTRA-WCP retains this source score.

To construct the alternative score, first consider the conditional
distribution implied by the joint tilt. Dividing the tilted joint
distribution by its input marginal gives
\begin{equation}
 Q_\beta(dy\mid x)
 =\frac{h_\beta(x,y)P(dy\mid x)/Z_P(\beta)}
        {M_\beta(x)/Z_P(\beta)}
 =\frac{h_\beta(x,y)}{M_\beta(x)}P(dy\mid x).
 \label{eq:tilt-induced-conditional}
\end{equation}
The global normalizer $Z_P(\beta)$ cancels, leaving the conditional
normalizer $M_\beta(x)$. In terms of conditional densities or mass
functions, this relation is
$q_\beta(y\mid x)=h_\beta(x,y)p_P(y\mid x)/M_\beta(x)$,
where $p_P$ denotes the true source conditional density or mass function.

ExTRA-WCP-T applies this transformation to the fitted predictor, giving
\begin{equation}
 p_\beta(y\mid x)
 =\frac{h_\beta(x,y)p_0(y\mid x)}
        {Z_\beta^{\mathrm{pred}}(x)},
 \qquad
 Z_\beta^{\mathrm{pred}}(x)
 =\int h_\beta(x,y)p_0(y\mid x)\,dy.
 \label{eq:tilted-predictive}
\end{equation}
For classification, the integral is a sum over labels.
The normalizer ensures that $p_\beta(\cdot\mid x)$ integrates
to one at each input. In our experiments, $p_0=\widehat p_P$,
so $Z_\beta^{\mathrm{pred}}(x)=\widehat M_\beta(x)$.
This fitted normalizer generally differs from the population
moment $M_\beta(x)$.

We apply the same scoring rule to $p_0$ and $p_\beta$.
For negative log density, $S_0(x,y)=-\log p_0(y\mid x)$ and
\begin{equation}
 S_\beta(x,y)
 =S_0(x,y)-\log h_\beta(x,y)
           +\log Z_\beta^{\mathrm{pred}}(x).
 \label{eq:tilted-score}
\end{equation}
ExTRA-WCP-T uses $S_{\widehat\beta}$ in weighted calibration.
Although $\log Z_\beta^{\mathrm{pred}}(x)$ is constant across
responses at a fixed input, it varies across inputs and must
be retained when comparing calibration and candidate scores.

An input-only tilt cancels from \eqref{eq:tilted-predictive},
leaving the predictive distribution and score unchanged.
A response-dependent tilt can change the relative probabilities
of candidate responses without refitting the source model.
This adjustment applies to any suitable predictive distribution
and does not require a Bayesian model.

\begin{figure}[htbp]
\centering
\begin{tikzpicture}[
  >={Latex[length=2.2mm]},
  node distance=8mm and 13mm,
  input/.style={draw=blue!55!black,fill=blue!5,rounded corners=2pt,
    align=center,minimum height=11mm,text width=4.0cm,font=\small},
  role/.style={draw=orange!70!black,fill=orange!7,rounded corners=2pt,
    align=center,minimum height=11mm,text width=4.0cm,font=\small},
  output/.style={draw=green!45!black,fill=green!7,rounded corners=2pt,
    align=center,minimum height=12mm,text width=5.5cm,font=\small},
  flow/.style={->,thick,draw=black!65}
]
\node[input] (predictor) {Source conditional models\\
  $\Dtr\longmapsto(p_0,\widehat p_P)$};
\node[input, right=of predictor] (extra) {ExTRA marginal matching\\
  $\widehat p_P,\Dshift^P,\Dshift^Q
  \longmapsto\widehat\beta$};
\node[role, below=of predictor] (score) {Fixed score function\\
  $S_0$ for WCP\\$S_{\widehat\beta}$ for WCP-T};
\node[role, below=of extra] (weights) {Fixed weight function\\
  $h_{\widehat\beta}(x,y)$};
\node[output, below=17mm of $(score)!0.5!(weights)$] (set)
  {Weighted calibration using $\Dcal$\\
   and candidate inversion\\
   prediction set $\C_{S,\widehat\beta}(X_{n+1})$};
\draw[flow] (predictor) -- (score);
\draw[flow] (predictor.east) -- (extra.west);
\draw[flow] (extra) -- (weights);
\draw[flow] (extra.south west) -- (score.north east);
\draw[flow] (score.south) -- ([xshift=-8mm]set.north);
\draw[flow] (weights.south) -- ([xshift=8mm]set.north);
\end{tikzpicture}
\caption{ExTRA-WCP and ExTRA-WCP-T share the fitted joint ratio and
calibration weights. Predictive tilting changes only the score. The
source conditional estimate $\widehat p_P$ also supplies the conditional
moments used to fit ExTRA. All fitted functions are fixed before calibration.}
\label{fig:schematic}
\end{figure}

\paragraph{Admissibility and source-model error.}
The fitted tilt, $h_{\widehat{\beta}}(x,y) = e^{\widehat{\beta}^{\top} \phi(x,y)}$,
must satisfy two integrability conditions.
The joint normalizer must obey
$0<Z_P(\widehat\beta)<\infty$ so that the tilt defines a joint
density ratio. Predictive tilting additionally requires
$0<Z_{\widehat\beta}^{\mathrm{pred}}(x)<\infty$
for $P_X$ almost every $x$ so that the adjusted predictor is
well defined. When $p_0\ne p_P$, joint integrability does not
by itself ensure predictive integrability. If only the latter
fails, ExTRA-WCP can still use the source score with the same
joint weights. If joint integrability fails, the fitted tilt
must be replaced before calibration.

Even with the true tilt, the adjusted predictor need not equal
the target conditional distribution. To see this, assume correct
specification, $Q=Q_{\beta^\star}$, and write $q(y\mid x)$ for
the true target conditional density or mass function. At a fixed
input $x$, suppose $p_0(\cdot\mid x)\ll p_P(\cdot\mid x)$ and define
$r_x(y)=p_0(y\mid x)/p_P(y\mid x)$. With finite positive normalizers,
\begin{equation}
 \frac{p_{\beta^\star}(y\mid x)}{q(y\mid x)}
 =\frac{r_x(y)}{\E_{Q_{Y\mid X=x}}r_x(Y)}
 \quad Q_{Y\mid X=x}\text{ almost surely}.
 \label{eq:predictive-mismatch}
\end{equation}
Thus, the discrepancy between the adjusted predictor and the
target is the relative source-model error, rescaled by its
target conditional mean. The true tilt does not generally
remove that error. Nevertheless, true joint weights guarantee
target marginal coverage for a well-defined score fitted
independently of calibration and testing. Coverage validity
therefore does not require an exact predictive distribution.

\subsection{Candidate-dependent weighted calibration}

Fix a score $S$ and an estimated tilt function $h_{\widehat\beta}$
before calibration. For the source calibration pairs
$(X_i,Y_i)$, $i=1,\ldots,n$, define
\[
 V_i=S(X_i,Y_i),
 \qquad
 H_i=h_{\widehat\beta}(X_i,Y_i).
\]
At a target input $x=X_{n+1}$, each candidate response $y$ has
its own score and weight,
\[
 V_{n+1}(y)=S(x,y),
 \qquad
 H_{n+1}(y)=h_{\widehat\beta}(x,y).
\]
We compare the candidate score with the calibration scores using
the weighted upper rank
\begin{equation}
 \widehat p(y)=
 \frac{H_{n+1}(y)+
       \sum_{i=1}^n H_i\ind\{V_i\geq V_{n+1}(y)\}}
      {H_{n+1}(y)+\sum_{i=1}^n H_i}.
 \label{eq:weighted-p}
\end{equation}
The numerator includes the candidate's own weight and the weights
of calibration observations whose scores are at least as large.
Including tied scores makes the rule conservative.
The prediction set retains candidates whose weighted rank
exceeds $\alpha$, giving
\begin{equation}
 \C_{S,\widehat\beta}(x)
 =\{y\in\Y:\widehat p(y)>\alpha\}.
 \label{eq:prediction-set}
\end{equation}
ExTRA-WCP uses $S=S_0$, while ExTRA-WCP-T uses
$S=S_{\widehat\beta}$. Both use the same weights.

The global normalizer $Z_P(\widehat\beta)$ cancels from
\eqref{eq:weighted-p}, so the weights can be computed directly
from $h_{\widehat\beta}$. By contrast, the predictive normalizer
$Z_{\widehat\beta}^{\mathrm{pred}}(x)$ remains part of the tilted
score. Dividing each calibration weight by its predictive normalizer
would generally change the
joint ratio and the implied target distribution.

When the fitted ratio depends on the response, changing $y$
can change both the candidate score and its share of the total
weight. The corresponding weighted score threshold can therefore
depend on $y$, so a single cutoff need not describe the entire
prediction set. When the fitted weights depend only on inputs,
the candidate weight is constant across responses, recovering
the usual covariate-weighted construction.

For classification, we evaluate \eqref{eq:weighted-p} for every
label. For regression, we invert the same rule over the full
response space. Randomized scores use independent auxiliary
variables with the same distribution under source and target.
Coverage averages over this randomness as well as the sampled data.

\FloatBarrier

\section{Coverage and the role of the score}
\label{sec:theory}

Both procedures use the same estimated joint ratio and therefore
share a coverage bound. Their actual coverage can nevertheless
differ because their scores determine which responses the sets miss.

\subsection{A common coverage bound}
\label{subsec:coverage}

Let $\mathcal F$ contain the fitting samples and fitting randomness.
Conditional on $\mathcal F$, the weight function and scoring rule
are fixed, the calibration pairs follow $P^n$, and the test pair
is independent of calibration. Any auxiliary randomness used to
evaluate scores is independent of these fitting quantities and
is included in the coverage probabilities below.

Assume $0<\E_P h_{\widehat\beta}<\infty$. The estimated tilt function
defines the normalized joint ratio and surrogate distribution
\begin{equation}
 \widehat w
 =\frac{h_{\widehat\beta}}{\E_P h_{\widehat\beta}},
 \qquad
 d\widehat Q=\widehat w\,dP.
 \label{eq:surrogate-target}
\end{equation}
We call $\widehat Q$ the surrogate target. The population normalizer
defines this distribution but need not be computed, since it cancels
from the weighted rank.

Write $\C_{S,\widehat w}$ for the prediction set in
\eqref{eq:prediction-set} constructed with score $S$ and weights
$\widehat w$. Weighted exchangeability
\citep[Lemmas~2--3 and Theorem~2]{TibshiraniR2019neurips},
applied to the joint observations, gives
\begin{equation}
 \Pp_{P^n\times\widehat Q}
 \{Y_{n+1}\in\C_{S,\widehat w}(X_{n+1})\mid\mathcal F\}
 \geq1-\alpha.
 \label{eq:surrogate-coverage}
\end{equation}
Thus, the fitted weights guarantee coverage under the distribution
they define. To obtain a guarantee under the actual target $Q$,
keep the calibration distribution fixed and change only the test
distribution from $\widehat Q$ to $Q$. Total variation (TV) bounds
the resulting change in coverage, giving
\begin{equation}
 \begin{aligned}
 &\Pp_{P^n\times Q}
 \{Y_{n+1}\in\C_{S,\widehat w}(X_{n+1})\mid\mathcal F\}
 \geq1-\alpha-\TV(Q,\widehat Q),\\
 &\TV(Q,\widehat Q)
 =\tfrac12\E_P|w^\star-\widehat w|.
 \end{aligned}
 \label{eq:target-transfer}
\end{equation}
This applies the estimated-weight argument of
\citet[Proposition~1]{LeiL2021jrsssb} to joint ratios.
See also \citet{BarberRF2023aos} for total variation bounds
beyond exchangeability.

The bound holds for either score, including when the source
predictor or ratio family is misspecified. With true weights,
both methods attain coverage at least $1-\alpha$. With estimated
weights, they share the same lower bound because they share
$\widehat w$. These guarantees average over calibration, the
test pair, and score randomness. They do not guarantee coverage
at every input or for every realized calibration sample.

\subsection{Why the scores can have different coverage}
\label{subsec:alignment}

ExTRA-WCP and ExTRA-WCP-T share the same estimated weights and
coverage bound, but their actual coverage can differ. Changing the
score changes which responses the prediction set tends to miss.
The effect of ratio estimation error therefore depends on where
those misses occur.

For a fixed pair $(x,y)$, define
\[
 e_S(x,y)
 =\Pp\{y\notin\C_{S,\widehat w}(x)\mid\mathcal F\},
\]
where the probability averages over source calibration samples
and auxiliary score randomness. Thus, $e_S(x,y)$ is the probability
of missing response $y$ at input $x$, conditional on the fitted
functions. Write $\mathrm{miss}$ for the event
$Y_{n+1}\notin\C_{S,\widehat w}(X_{n+1})$.
Keeping calibration unchanged and replacing the surrogate test
distribution $\widehat Q$ with the actual target $Q$ gives
\begin{equation}
 \begin{aligned}
 \Delta_S
 &:=\Pp_{P^n\times Q}(\mathrm{miss}\mid\mathcal F)
   -\Pp_{P^n\times\widehat Q}(\mathrm{miss}\mid\mathcal F)\\
 &=\E_P[(w^\star-\widehat w)e_S].
 \end{aligned}
 \label{eq:score-transfer-identity}
\end{equation}
Where $\widehat w$ understates target probability, frequent misses
contribute positively to $\Delta_S$. Where the procedure almost
always covers, the same ratio error contributes little.
Since $0\leq e_S\leq1$, every score satisfies
$|\Delta_S|\leq\TV(Q,\widehat Q)$, while its actual transfer
error depends on the miss probabilities $e_S$.

A positive transfer error does not by itself imply coverage below
nominal, because calibration under the surrogate may be conservative.
Define its surplus coverage by
\[
 s_S
 =\alpha-
 \Pp_{P^n\times\widehat Q}(\mathrm{miss}\mid\mathcal F)
 \geq0.
\]
Then
\[
 \Pp_{P^n\times Q}(\mathrm{miss}\mid\mathcal F)-\alpha
 =\Delta_S-s_S.
\]
Target undercoverage occurs when the transfer error exceeds this
surplus. Differences in target coverage can therefore reflect both
the alignment of ratio errors with misses and differences in
conservatism under the surrogate. Observed target coverage alone
does not separate these effects.

For the regression families studied below, the transfer error
can be separated into two response modes. Let
$Z=\operatorname{sign}(Y)$, assume $P(Y=0)=0$, and suppose
both ratios depend only on $(x,z)$. Denote their values by
$w_z^\star(x)$ and $\widehat w_z(x)$, and define
\[
 \pi_z(x)=P(Z=z\mid X=x),
 \qquad
 m_{S,z}(x)
 =\E_P[e_S(x,Y)\mid X=x,Z=z].
\]
Here $m_{S,z}(x)$ averages the miss probability over responses
within mode $z$ at input $x$. Since the weights are constant
within each mode, they preserve the conditional distribution
of $Y$ given $(X,Z)$. Conditioning
\eqref{eq:score-transfer-identity} on $(X,Z)$ yields
\begin{equation}
 \Delta_S
 =\E_{P_X}\sum_{z=\pm1}
 \pi_z(X)\{w_z^\star(X)-\widehat w_z(X)\}m_{S,z}(X).
 \label{eq:mode-transfer}
\end{equation}

The role of mode probabilities is especially clear when the
input marginals match exactly, $Q_X=\widehat Q_X$. Write
\[
 \tau^\star(x)=Q(Z=+1\mid X=x),
 \qquad
 \widehat\tau(x)=\widehat Q(Z=+1\mid X=x).
\]
Then \eqref{eq:mode-transfer} simplifies to
\begin{equation}
 \Delta_S
 =\E_{Q_X}
 [(\tau^\star-\widehat\tau)(m_{S,+}-m_{S,-})].
 \label{eq:mode-alignment}
\end{equation}
For example, the integrand is positive when the surrogate
understates the positive-mode probability and the procedure
misses more often on that mode. More generally, a nonnegative
product $Q_X$ almost everywhere, positive on a set of positive
$Q_X$ measure, implies $\Delta_S>0$. Undercoverage additionally
requires $\Delta_S>s_S$.

When input marginals do not match, \eqref{eq:mode-transfer}
also accounts for their discrepancy. The mode probability
$\widehat\tau$ is defined under the surrogate $\widehat Q=\widehat wP$.
It need not equal the mode probability of the tilted fitted
predictor, which also depends on source-model error.

The score adjustment shows how tilting changes the relative
treatment of the two modes. For $x=(u,v)$ and
$h_{a,b}(x,y)=e^{au+b\operatorname{sign}(y)}$, let
$\pi_z^0(x)$ denote the source predictor's probability of mode $z$.
Negative log density gives
\begin{equation}
 S_{a,b}(x,y)
 =S_0(x,y)-bz+
 \log\{\pi_+^0(x)e^b+\pi_-^0(x)e^{-b}\},
 \qquad z=\operatorname{sign}(y).
 \label{eq:sign-score}
\end{equation}
The input-only factor $e^{au}$ cancels. At a fixed input,
the positive-mode score adjustment minus the negative-mode
adjustment is $-2b$. Thus, $b>0$ favors the positive mode
relative to the source score. For the interaction tilt
$e^{au+(b+cu)z}$, the difference becomes $-2(b+cu)$, allowing
the adjustment to vary with the input.

This score comparison alone does not determine coverage.
As the tilt coefficients vary, calibration scores, weights,
and candidate thresholds also change.
Appendix~\ref{app:inversion} gives the full set inversion
and explicit within-mode miss probabilities for the
regression model.

\paragraph{Checking the full response set.}
\label{subsec:geometry}

Set size requires a separate check because a candidate's own
weight can force inclusion regardless of its score. Fix an
input $x$, let $W=\sum_{i=1}^nH_i>0$, and write
$H(y)=h_{\widehat\beta}(x,y)>0$, with all weights finite.
Equation~\eqref{eq:weighted-p} implies
\begin{equation}
 \widehat p(y)\geq\frac{H(y)}{W+H(y)},
 \qquad
 H(y)>\frac{\alpha W}{1-\alpha}
 \ \Longrightarrow\
 y\in\C_{S,\widehat w}(x)
 \quad\text{for every }S.
 \label{eq:forced-inclusion-threshold}
\end{equation}
This is the effect of the candidate atom in the weighted
quantile construction \citep{TibshiraniR2019neurips}.
The strict inequality corresponds to the inclusion rule
$\widehat p(y)>\alpha$.

For example, on $\Y=\R$, a linear response tilt
$H(y)=e^{r(x)+\gamma y}$ with finite $r(x)$ and $\gamma\ne0$
eventually exceeds this threshold throughout one response tail.
Both score choices therefore produce sets of infinite total
length at that input. Yet the predictive normalizer can remain
finite, as it does for a Gaussian predictor under this tilt.

Bounded response features prevent candidate weights from
diverging at a fixed input, but those weights may still exceed
the inclusion threshold. Predictive normalizability alone
therefore does not ensure finite prediction-set length.
Regression sets must be checked over the full response space.

\section{Experiments}
\label{sec:experiments}

We compare coverage and set size at nominal coverage $0.90$. Each
WCP/WCP-T pair shares the learned predictor, fitted ratio, calibration
sample, and test observations. Target responses are used only for
evaluation within each run. We compute target coverage and average set
size per run, using class count for classification and the sum of
component lengths for regression, allowing disconnected sets. Tables
report the means of these run-level quantities with standard errors
(SEs). Figures show means $\pm$ one SE across replications or real-data
source partitions. Paired $95\%$
Student's $t$ intervals are descriptive, without multiplicity adjustment.
Size comparisons are interpreted alongside achieved coverage.

Each synthetic setting uses $30$ independent replications. Each
replication draws five independent samples comprising $2000$ source
training pairs, $2000$ source shift pairs, $1500$ source calibration
pairs, $5000$ target inputs, and $4000$ target test pairs. Scores are negative log conditional probability or
density. Appendix~\ref{app:implementation} gives fitting details.

\subsection{Synthetic classification}
\label{subsec:phase}
The source has three Gaussian classes with probabilities
$(0.36,0.34,0.30)$, covariance $I_2$, and means $(-2,0)^\top$,
$(2,0)^\top$, and $(0,2.5)^\top$. Class-specific linear exponential
tilts change both class proportions and within-class inputs. A fitted
multinomial logistic regression supplies scores and ExTRA moments.
Coefficient bounds exclude alternative component assignments, identifying
the joint ratio at this target.

\begin{table}[ht!]
\centering
\caption{Synthetic classification at nominal coverage $0.90$. Entries
are means (SEs) over $30$ replications. Oracle rows use the true joint
ratio while retaining the same learned predictor, calibration sample,
and test observations.}
\label{tab:classification}
\small
\begin{tabular}{llrr}
\hline
Ratio & Method & Coverage & Set size\\
\hline
None & Standard CP & 0.775 (0.003) & 0.914 (0.004)\\
Fitted & ExTRA-WCP & 0.924 (0.007) & 1.445 (0.062)\\
Fitted & ExTRA-WCP-T & 0.814 (0.020) & 1.219 (0.017)\\
Oracle & ExTRA-WCP & 0.903 (0.003) & 1.246 (0.012)\\
Oracle & ExTRA-WCP-T & 0.903 (0.003) & 1.202 (0.010)\\
\hline
\end{tabular}
\end{table}

ExTRA-WCP raises coverage from standard CP's $0.775$ to $0.924$, but
tilting lowers it to $0.814$ (Table~\ref{tab:classification}). The paired
WCP-T minus WCP difference is $-0.110$ ($95\%$ interval
$[-0.164,-0.057]$). True weights bring both methods to $0.903$.
Identification therefore does not ensure estimation accurate enough for
tilting. This control does not separate source-model, normalization,
regularization, and optimization errors. The coverage loss limits any
efficiency claim for the smaller tilted sets. Standard CP's mean size
is below one because $8.56\%$ of its sets are empty, compared with
$0\%$ for fitted WCP and $0.19\%$ for WCP-T.

\subsection{Regression under covariate and response shifts}
\label{subsec:regression-adaptation}

We design this experiment around two changes that can make prediction
harder after deployment. The target may contain noisier inputs, and
outcomes that were uncommon in the source may become more frequent.
Varying these changes separately and together lets us ask whether joint
weighting restores coverage and whether tilting reduces the length
needed to obtain that coverage.

The response has two modes, concentrated around negative and positive
values. Let $X=(U,V)$, where $U$ and $V$ are independent and uniform
on $[-1,1]$, and let $Z=\operatorname{sign}(Y)$ indicate the mode.
We generate source observations from
\begin{align}
 P_\eta(Z=+1\mid u,v)
 &=\operatorname{logit}^{-1}(d_\eta+3\eta v),\nonumber\\
 Y\mid u,v,z
 &\sim\mathcal N\!\left(1.25z+0.10v,\{0.22e^{0.8u}\}^2\right)
              \quad\text{restricted to }\{zy>0\},
 \label{eq:adaptation-source}
\end{align}
where $\operatorname{logit}^{-1}(t)=1/(1+e^{-t})$ is the logistic function. 
Larger $U$ increases
the noise scale, while $V$ indicates which mode is more likely.
Truncation to the corresponding half-line makes the mode observable
from the sign of a source response. Thus, $U$ controls response
variability, while $V$ provides information about response-mode frequencies.

Here $\eta=1$ and $d_1=-2$. Averaging the conditional positive-mode
probability over $V$ gives
\[
p_+=\E_V[P(Y>0\mid V)]
 =\frac12\int_{-1}^1\operatorname{logit}^{-1}(-2+3v)\,dv
 =\frac{\log(1+e)-\log(1+e^{-5})}{6}\simeq0.218.
\]

Positive responses thus account for about $22\%$ of the source
population. We generate targets by normalizing the tilt
\begin{equation}
 h_{a,b}(u,v,y)=\exp\{au+b\operatorname{sign}(y)\},\qquad
 (a^\star,b^\star)\in\{0,1\}\times\{0,0.6,1.2\}.
 \label{eq:adaptation-tilt}
\end{equation}
We use $(a^\star,b^\star)$ for the coefficients generating a target
and $(\widehat a,\widehat b)$ for their estimates. At $a^\star=1$,
larger values of $U$ receive more weight, shifting the population toward
noisier inputs. Positive $b^\star$ increases the frequency of the
initially rare positive mode. Since $U$ and $Z$ are source-independent,
\[
 Q(Z=+1)=\frac{p_+e^{b^\star}}
                   {p_+e^{b^\star}+(1-p_+)e^{-b^\star}},
\]
which reaches $0.754$ at $b^\star=1.2$. The six settings include
the no-shift control $(0,0)$, the covariate-shift control $(1,0)$,
two response-only shifts with $a^\star=0$, and two combined shifts.
The factorization $h_{a,b}=e^{au}e^{bz}$ makes the shift
\emph{separable}. It changes the input distribution and mode
frequencies while preserving $Y\mid U,V,Z$.

The role of $V$ is essential because target responses are unavailable
during fitting. Favoring the positive mode also favors the values of
$V$ where that mode is common in the source. This change in the target
input distribution identifies $b$ when $\eta>0$
(Proposition~\ref{prop:separated-identification}). The distribution of
$U$ separately identifies $a$, because $U$ and $Z$ are independent
under the source. Nonconstancy of the mode probability supplies
identification. Separating its input from the covariate tilt makes
the two parameters easier to estimate in this design.

The feature restriction is nevertheless substantive. Allowing an
additional $cv\operatorname{sign}(y)$ term gives $(a,b,c)$ and
$(a,-d_\eta-b,-3\eta-c)$ the same input marginal despite complementary
conditional mode probabilities (Proposition~\ref{prop:learned-ambiguity}).
Matching target inputs cannot establish that this interaction is absent.

Both the source conditional and tilt families are correctly specified,
but all coefficients are learned. We fit a logistic mode probability
on $(u,v)$, per-sign Gaussian locations linear in $(1,u,v)$, and a
shared log scale linear in $u$. Both $a$ and $b$ are fitted in every
case, including controls. This favorable specification allows us to
study weighting and scoring when the model can represent the target.
A two-stage baseline first estimates the
covariate tilt on $U$, then uses soft black-box shift estimation (BBSE)
\citep{LiptonZ2018icml}
to estimate the binary mode prior from the shared learned gate.
BBSE uses the binary source sign as its label.
Appendix~\ref{app:two-stage} explains the construction.

Pilot studies guided the design. All settings and both ExTRA penalties
were fixed before final evaluation. Settings share source samples and
predictors within a replication, with independent target samples.
The two-stage baseline was added later with fitting choices fixed
before its evaluation. Final target responses were never used for tuning.

\begin{figure}[ht!]
\centering
\begin{tikzpicture}
\begin{axis}[resultaxis,height=4.5cm,title={Response shift ($a^\star=0$)},ylabel={Coverage},
  xmin=-.2,xmax=2.2,xtick={0,1,2},xticklabels={{$(0,0)$},{$(0,0.6)$},{$(0,1.2)$}},
  xlabel={Shift parameters $(a^\star,b^\star)$},ymin=.72,ymax=.94,ytick={.75,.80,.85,.90},legend columns=3,legend to name=adaptlegend]
\addplot[cpplot,error bars/.cd,y dir=both,y explicit] coordinates {
(-0.06,0.898642) +- (0,0.001617)
(0.94,0.851958) +- (0,0.002176)
(1.94,0.803683) +- (0,0.002705)
};
\addlegendentry{Standard CP}
\addplot[wcpplot,error bars/.cd,y dir=both,y explicit] coordinates {
(0.00,0.900650) +- (0,0.002344)
(1.00,0.900375) +- (0,0.002360)
(2.00,0.900617) +- (0,0.002470)
};
\addlegendentry{ExTRA-WCP}
\addplot[tiltplot,error bars/.cd,y dir=both,y explicit] coordinates {
(0.06,0.899258) +- (0,0.002179)
(1.06,0.898950) +- (0,0.001704)
(2.06,0.897925) +- (0,0.002299)
};
\addlegendentry{ExTRA-WCP-T}
\addplot[black!65,dashed,no marks,forget plot] coordinates {(-.2,.9)(2.2,.9)};
\end{axis}
\end{tikzpicture}\hfill
\begin{tikzpicture}
\begin{axis}[resultaxis,height=4.5cm,title={Covariate and response shift ($a^\star=1$)},ylabel={Coverage},
  xmin=-.2,xmax=2.2,xtick={0,1,2},xticklabels={{$(1,0)$},{$(1,0.6)$},{$(1,1.2)$}},
  xlabel={Shift parameters $(a^\star,b^\star)$},ymin=.72,ymax=.94,ytick={.75,.80,.85,.90}]
\addplot[cpplot,error bars/.cd,y dir=both,y explicit] coordinates {
(-0.06,0.870133) +- (0,0.001756)
(0.94,0.814025) +- (0,0.002793)
(1.94,0.755450) +- (0,0.003137)
};
\addplot[wcpplot,error bars/.cd,y dir=both,y explicit] coordinates {
(0.00,0.900650) +- (0,0.002457)
(1.00,0.902925) +- (0,0.003083)
(2.00,0.903783) +- (0,0.003032)
};
\addplot[tiltplot,error bars/.cd,y dir=both,y explicit] coordinates {
(0.06,0.897492) +- (0,0.002160)
(1.06,0.901833) +- (0,0.002623)
(2.06,0.898608) +- (0,0.003221)
};
\addplot[black!65,dashed,no marks,forget plot] coordinates {(-.2,.9)(2.2,.9)};
\end{axis}
\end{tikzpicture}
\par\smallskip
\begin{tikzpicture}
\begin{axis}[resultaxis,height=4.5cm,title={Response shift ($a^\star=0$)},ylabel={Mean total length},
  xmin=-.2,xmax=2.2,xtick={0,1,2},xticklabels={{$(0,0)$},{$(0,0.6)$},{$(0,1.2)$}},
  xlabel={Shift parameters $(a^\star,b^\star)$},ymin=1.0,ymax=2.35,ytick={1.0,1.5,2.0}]
\addplot[cpplot,error bars/.cd,y dir=both,y explicit] coordinates {
(-0.06,1.157964) +- (0,0.006396)
(0.94,1.230118) +- (0,0.007246)
(1.94,1.308304) +- (0,0.007873)
};
\addplot[wcpplot,error bars/.cd,y dir=both,y explicit] coordinates {
(0.00,1.167518) +- (0,0.009385)
(1.00,1.446449) +- (0,0.010422)
(2.00,1.714928) +- (0,0.014366)
};
\addplot[tiltplot,error bars/.cd,y dir=both,y explicit] coordinates {
(0.06,1.168379) +- (0,0.009014)
(1.06,1.317970) +- (0,0.008737)
(2.06,1.191027) +- (0,0.013780)
};
\end{axis}
\end{tikzpicture}\hfill
\begin{tikzpicture}
\begin{axis}[resultaxis,height=4.5cm,title={Covariate and response shift ($a^\star=1$)},ylabel={Mean total length},
  xmin=-.2,xmax=2.2,xtick={0,1,2},xticklabels={{$(1,0)$},{$(1,0.6)$},{$(1,1.2)$}},
  xlabel={Shift parameters $(a^\star,b^\star)$},ymin=1.0,ymax=2.35,ytick={1.0,1.5,2.0}]
\addplot[cpplot,error bars/.cd,y dir=both,y explicit] coordinates {
(-0.06,1.327921) +- (0,0.007968)
(0.94,1.407777) +- (0,0.009236)
(1.94,1.487582) +- (0,0.010042)
};
\addplot[wcpplot,error bars/.cd,y dir=both,y explicit] coordinates {
(0.00,1.485629) +- (0,0.013359)
(1.00,1.854340) +- (0,0.016825)
(2.00,2.189599) +- (0,0.021411)
};
\addplot[tiltplot,error bars/.cd,y dir=both,y explicit] coordinates {
(0.06,1.481677) +- (0,0.012343)
(1.06,1.695758) +- (0,0.014856)
(2.06,1.524788) +- (0,0.023376)
};
\end{axis}
\end{tikzpicture}
\par\smallskip
\ref{adaptlegend}
\caption{Regression at $\eta=1$, showing target coverage (top) and mean total
set length (bottom), with means $\pm$ one SE over $30$ replications.
Axes give true $(a^\star,b^\star)$. The left column varies the response
shift alone, while the right adds a shift toward noisier inputs. The first
points, $(0,0)$ and $(1,0)$, are no-shift and covariate-shift controls.
Both coefficients are estimated in all six cases. Dashed lines mark
$0.90$ coverage, and horizontal offsets separate methods.}

\label{fig:regression-results}
\end{figure}

Standard CP loses coverage as either shift increases, while ExTRA-WCP
covers between $0.900$ and $0.904$ (Figure~\ref{fig:regression-results}).
At $(1,1.2)$, input-only WCP covers only $0.815$
(Table~\ref{tab:regression-adaptation}). Its weights
$\widehat M_{\widehat\beta}(x)$ estimate the input ratio up to a common
factor. Even the true input ratio would leave the conditional response
shift uncorrected.

\begin{table}[htbp]
\centering
\caption{Target coverage and mean total prediction-set length under the
combined shift $(a^\star,b^\star)=(1,1.2)$, at nominal coverage $0.90$.
Entries are means (SEs) over $30$ replications. Input-only WCP uses
$\widehat M_{\widehat\beta}(x)$ as an unnormalized input-ratio estimate.
The two-stage baseline
estimates the covariate tilt and mode prior separately. Oracle methods
use the true joint ratio. All methods share the learned source predictor
and evaluation samples.}
\label{tab:regression-adaptation}
\small
\begin{tabular}{lrr}
\hline
Method & Coverage & Total length\\
\hline
Standard CP & 0.755 (0.003) & 1.488 (0.010)\\
Input-only WCP & 0.815 (0.004) & 1.728 (0.015)\\
ExTRA-WCP & 0.904 (0.003) & 2.190 (0.021)\\
ExTRA-WCP-T & 0.899 (0.003) & 1.525 (0.023)\\
Two-stage WCP & 0.902 (0.003) & 2.181 (0.024)\\
Two-stage WCP-T & 0.902 (0.003) & 1.535 (0.019)\\
Oracle WCP & 0.901 (0.003) & 2.174 (0.023)\\
Oracle WCP-T & 0.902 (0.002) & 1.535 (0.017)\\
\hline
\end{tabular}
\end{table}

Tilting gives near-nominal coverage at lengths close to, or shorter
than, standard CP's under-covering sets. At $(1,1.2)$, ExTRA-WCP-T
has coverage $0.899$ and length $1.525$, compared with $0.755$ and
$1.488$ for standard CP. At $(0,1.2)$, tilting gives coverage $0.898$
and length $1.191$, versus $0.804$ and $1.308$ for standard CP.
Relative to ExTRA-WCP at $(1,1.2)$,
the mean paired length reduction is $30.34\%$ ($95\%$ interval
$[28.56\%,32.12\%]$). The coverage difference is $-0.0052$
($[-0.0121,0.0017]$).

The length gain comes from how the score allocates space between the
two modes. The source predictor assigns little probability to the
positive mode. With the source score, weighted calibration must admit
lower-density responses to cover that mode in the target population,
also expanding the region around the mode already common in the source.
Tilting moves predictive mass toward the positive mode, allowing the
set to cover the target with less total length. The oracle length gain
of $29.33\%$ shows that this effect persists without ratio estimation
error. Two-stage gives similar coverage and length, supporting the
benefit of weighting and tilting without establishing an advantage for
ExTRA's ratio estimator.

With $b^\star=0$, length-reduction intervals include zero. In the
covariate-shift control, the true tilt cancels from the predictor, but
fitting an unnecessary response coefficient changes the score. The paired
WCP-T minus WCP coverage difference is $-0.0032$ ($95\%$ interval
$[-0.0050,-0.0013]$).
The larger penalty preserves the main length gain at $(1,1.2)$
($27.56\%$, with WCP and WCP-T coverage $0.898$ and $0.904$).
All evaluated synthetic regression sets have finite length under full inversion.

\subsection{A response shift that depends on the noise input}
\label{subsec:regression-interaction}

The previous experiment changes mode frequencies by the same tilt at
every input. We next ask what happens when this response shift itself
depends on the noise level. Keeping the same source at $\eta=1$, use
\begin{equation}
 h^U_{a,b,c}(u,v,y)=\exp\{au+(b+cu)\operatorname{sign}(y)\},
 \qquad(a^\star,b^\star,c^\star)=(1,0,2).
 \label{eq:interaction-tilt}
\end{equation}
The target mode log odds become $-2+3v+4u$, making positive responses
more likely at noisier inputs. This shift is nonseparable because the
ratio between signs, $e^{2(b+cu)}$, varies with $u$. The nonconstant
source mode probability in $v$ identifies $(a,b,c)$
(Proposition~\ref{prop:separated-identification}), unlike the ambiguous
interaction with $V$ in Proposition~\ref{prop:learned-ambiguity}.

After preliminary results suggested a benefit, fitting choices were
fixed before $30$ fresh replications. We compare full ExTRA with
separable ExTRA ($c=0$), holding the source model and samples fixed.
Only the full fitted family represents the generating shift. Comparing
full and separable ExTRA assesses the value of that representation.
Comparing WCP and WCP-T within either family isolates the additional
effect of changing the score.

\begin{table}[htbp]
\centering
\caption{Response interaction at $(a^\star,b^\star,c^\star)=(1,0,2)$.
Entries are means (SEs) over $30$ replications at nominal coverage $0.90$.
The full family includes $U\operatorname{sign}(Y)$, which separable fits
omit. Oracle weights retain the learned source predictor. TV uses population
normalization. The full and separable ExTRA rows compare the effect
of including the interaction.}
\label{tab:regression-interaction}
\small
\setlength{\tabcolsep}{3pt}
\begin{tabular}{lrrrrr}
\hline
 & \multicolumn{2}{c}{WCP} & \multicolumn{2}{c}{WCP-T} & \\
Ratio estimator & Coverage & Length & Coverage & Length & TV\\
\hline
ExTRA, full & 0.900 (0.004) & 1.760 (0.022) & 0.899 (0.003) & 1.155 (0.012) & 0.041 (0.003)\\
ExTRA, separable & 0.875 (0.003) & 1.621 (0.014) & 0.892 (0.002) & 1.442 (0.010) & 0.384 ($<0.001$)\\
Oracle, full & 0.900 (0.004) & 1.760 (0.022) & 0.899 (0.002) & 1.146 (0.010) & 0\\
\hline
\end{tabular}
\end{table}

Within the full family, tilting reduces mean length by $34.24\%$
($95\%$ interval $[32.94\%,35.53\%]$), with coverage difference
$-0.0014$ ($[-0.0075,0.0047]$). Relative to separable ExTRA-WCP-T,
the full model's tilted sets are $19.85\%$ shorter and cover $0.899$
instead of $0.892$ (Table~\ref{tab:regression-interaction}). Its TV
error is $0.041$ rather than $0.384$, and oracle results are close.
Two-stage behaves similarly to separable ExTRA. Its WCP and WCP-T
coverages are $0.875$ and $0.892$, with lengths $1.617$ and $1.441$.
The result supports representing the response--noise interaction,
without establishing superiority over other estimators that can
represent it.

\subsection{Identification sensitivity and ratio error}
\label{subsec:regression-sensitivity}
\label{subsec:diagnostics}

The strong-signal study leaves open whether target inputs carry enough
information for reliable fitting when source inputs are less predictive
of the response mode. We examine this within the same separable family,
reducing the mode signal to
$\eta\in\{1,0.5,0.25,0\}$ at fixed shift $(a^\star,b^\star)=(1,1.2)$.
Adjusting $d_\eta$ preserves $p_+$ and hence the target mode frequency.
Component distributions, features, fitting, and sample sizes remain
fixed. The $\eta=1$ case reuses the main replications. Each other level
uses $30$ new replications. Neither $\eta$ nor $d_\eta$ is supplied
to the estimator. For every $\eta>0$, the ratio is identified, whereas
at $\eta=0$,
\begin{equation}
 M_{a,b}(u,v)=e^{au}\{p_+e^b+(1-p_+)e^{-b}\},\qquad
 \frac{dQ_{a,b,X}}{dP_X}(u,v)=\frac{e^{au}}{\E_P e^{aU}},
 \label{eq:hidden-mode-marginal}
\end{equation}
so target inputs contain no information about $b$.

This ladder also changes the source score's alignment with target
responses. At $\eta=1$, positive target responses tend to occur where
the source already assigns that mode higher probability. At $\eta=0$,
the population score carries the same positive-mode penalty $-\log p_+$
everywhere. Standard CP's falling coverage therefore does not indicate
a larger mode-prior shift.

\begin{figure}[htbp]
\centering
\begin{tikzpicture}
\begin{axis}[resultaxis,title={Target coverage},ylabel={Coverage},xmin=-.25,xmax=3.25,
  xtick={0,1,2,3},xticklabels={1,0.5,0.25,0},xlabel={Mode signal $\eta$},ymin=.40,ymax=.96,ytick={.5,.6,.7,.8,.9},legend columns=3,legend to name=sensitivitylegend]
\addplot[cpplot,error bars/.cd,y dir=both,y explicit] coordinates {
(-0.06,0.755450) +- (0,0.003137)
(0.94,0.724567) +- (0,0.004154)
(1.94,0.703083) +- (0,0.005037)
(2.94,0.686150) +- (0,0.006413)
};
\addlegendentry{Standard CP}
\addplot[wcpplot,error bars/.cd,y dir=both,y explicit] coordinates {
(0.00,0.903783) +- (0,0.003032)
(1.00,0.904033) +- (0,0.003589)
(2.00,0.818792) +- (0,0.024065)
(3.00,0.727017) +- (0,0.039488)
};
\addlegendentry{ExTRA-WCP}
\addplot[tiltplot,error bars/.cd,y dir=both,y explicit] coordinates {
(0.06,0.898608) +- (0,0.003221)
(1.06,0.878375) +- (0,0.008784)
(2.06,0.730383) +- (0,0.038636)
(3.06,0.492967) +- (0,0.044682)
};
\addlegendentry{ExTRA-WCP-T}
\addplot[black!65,dashed,no marks,forget plot] coordinates {(-.25,.9)(3.25,.9)};
\end{axis}
\end{tikzpicture}\hfill
\begin{tikzpicture}
\begin{axis}[resultaxis,title={Prediction-set length},ylabel={Mean total length},xmin=-.25,xmax=3.25,
  xtick={0,1,2,3},xticklabels={1,0.5,0.25,0},xlabel={Mode signal $\eta$},ymin=.8,ymax=2.6,ytick={1.0,1.5,2.0,2.5}]
\addplot[cpplot,error bars/.cd,y dir=both,y explicit] coordinates {
(-0.06,1.487582) +- (0,0.010042)
(0.94,1.612170) +- (0,0.009678)
(1.94,1.622997) +- (0,0.009920)
(2.94,1.578865) +- (0,0.011703)
};
\addplot[wcpplot,error bars/.cd,y dir=both,y explicit] coordinates {
(0.00,2.189599) +- (0,0.021411)
(1.00,2.301977) +- (0,0.016805)
(2.00,2.028145) +- (0,0.067988)
(3.00,1.838141) +- (0,0.106367)
};
\addplot[tiltplot,error bars/.cd,y dir=both,y explicit] coordinates {
(0.06,1.524788) +- (0,0.023376)
(1.06,1.673176) +- (0,0.039612)
(2.06,1.541521) +- (0,0.064915)
(3.06,1.103244) +- (0,0.041301)
};
\end{axis}
\end{tikzpicture}
\par\smallskip\ref{sensitivitylegend}
\caption{Sensitivity at fixed shift $(a^\star,b^\star)=(1,1.2)$.
Reducing $\eta$ weakens mode information in the inputs while preserving
mode frequencies and component distributions. It also changes the source
score's alignment with target responses. ExTRA estimation deteriorates
before nonidentification at $\eta=0$. Two-stage results in the text show
that some loss is estimator-specific. Points show means $\pm$ one SE
over $30$ replications. The $\eta=1$ results also appear in
Figure~\ref{fig:regression-results}.
The dashed line marks $0.90$ coverage.}

\label{fig:identification-results}
\end{figure}

At $\eta=0.25$, ExTRA-WCP and WCP-T cover $0.819$ and $0.730$.
At zero signal they cover $0.727$ and $0.493$
(Figure~\ref{fig:identification-results}). Oracle coverage stays between
$0.898$ and $0.905$. The intermediate case $\eta=0.5$ is noisy.
WCP and WCP-T cover $0.904$ and $0.878$, with paired difference $-0.026$
($95\%$ interval $[-0.049,-0.003]$) and WCP-T replication SD $0.048$.
The larger penalty reverses the ordering to $0.890$ and $0.905$.

Two-stage is more accurate at weak positive signal. At $\eta=0.25$,
its WCP and WCP-T cover $0.904$ and $0.874$, improving over ExTRA by
$0.085$ and $0.144$, with intervals $[0.035,0.135]$ and $[0.060,0.228]$.
At $\eta=0$, their coverages fall to $0.705$ and $0.588$.
Thus the positive-signal losses depend on fitting accuracy, while zero
signal prevents identification itself. The alternative estimator is
useful here because it shows that weak source-mode information can
remain usable even when ExTRA's fitted ratio deteriorates.

For each synthetic fit, we compute
$d_r=\frac12\E_P|w^\star-\widehat w_r|$ using the known source
distribution and population normalization
(Appendix~\ref{app:ratio-diagnostics}). The shared ratio gives WCP
and WCP-T the same bound $0.90-d_r$. Table~\ref{tab:ratio-diagnostics}
reports these errors and coverage differences. The averaged bound is
not a confidence bound for empirical coverage.

\begin{table}[ht!]
\centering
\caption{Joint-ratio error and paired coverage differences for synthetic
classification and separable regression over $30$ replications. TV entries
are means (SEs) for each estimator. The bound and last two columns refer
to ExTRA and report $0.90-\overline d$, WCP-T minus WCP coverage,
and its paired $95\%$ interval, respectively. The two-stage baseline is not used for classification.}
\label{tab:ratio-diagnostics}
\small
\setlength{\tabcolsep}{4pt}
\begin{tabular}{lrrrrr}
\hline
Setting & ExTRA TV & Two-stage TV & ExTRA bound & Difference & $95\%$ interval\\
\hline
Classification & 0.187 (0.028) & --- & 0.713 & $-0.110$ & $[-0.164,-0.057]$\\
\multicolumn{6}{l}{Main regression ($\eta=1$), $(a^\star,b^\star)$}\\
$(0,0)$ & 0.039 (0.005) & 0.014 (0.001) & 0.861 & $-0.001$ & $[-0.003,0.000]$\\
$(0,0.6)$ & 0.045 (0.005) & 0.020 (0.002) & 0.855 & $-0.001$ & $[-0.006,0.003]$\\
$(0,1.2)$ & 0.040 (0.004) & 0.023 (0.003) & 0.860 & $-0.003$ & $[-0.007,0.002]$\\
$(1,0)$ & 0.046 (0.006) & 0.019 (0.002) & 0.854 & $-0.003$ & $[-0.005,-0.001]$\\
$(1,0.6)$ & 0.050 (0.006) & 0.019 (0.002) & 0.850 & $-0.001$ & $[-0.006,0.004]$\\
$(1,1.2)$ & 0.046 (0.005) & 0.024 (0.003) & 0.854 & $-0.005$ & $[-0.012,0.002]$\\
\multicolumn{6}{l}{Mode-signal sensitivity, $(a^\star,b^\star)=(1,1.2)$}\\
$\eta=0.5$ & 0.106 (0.013) & 0.048 (0.006) & 0.794 & $-0.026$ & $[-0.049,-0.003]$\\
$\eta=0.25$ & 0.315 (0.042) & 0.082 (0.011) & 0.585 & $-0.088$ & $[-0.134,-0.043]$\\
$\eta=0$ & 0.470 (0.048) & 0.529 (0.038) & 0.430 & $-0.234$ & $[-0.269,-0.199]$\\
\hline
\end{tabular}
\end{table}

Two-stage has lower mean TV at every positive signal level studied.
At $\eta=0.25$, its TV is $0.082$ versus ExTRA's $0.315$, confirming
that usable information remains. Yet TV alone does not order scores.
In classification the common error is $0.187$, while WCP exceeds
nominal coverage by $0.024$ and WCP-T falls short by $0.086$.
Measuring the signed transfer term would additionally require
miscoverage under the surrogate in \eqref{eq:score-transfer-identity}.

Fitted coefficients reveal instability. At $\eta=1$, $\widehat b$
averages $1.245$ (SD $0.139$), about $1.8$ SEs above $1.2$.
Gate and normalization errors may contribute. At $\eta=0.25$ and $0$,
$8/30$ and $15/30$ estimates are negative. Wrong-sign and excessively
positive fits accompany coverage losses. At zero signal, six estimates
reach $b=3$. The unpenalized population criterion is flat in $b$ and
ridge selects zero, but noise in the learned gate and separate empirical
normalizer produces variation for the optimizer to fit. All runs are
retained. These post hoc associations do not yield a score-selection
rule without target responses.

\subsection{Classification and regression on real data}
\label{subsec:real-data}

Gas Sensor Array Drift at Different Concentrations
(\href{https://doi.org/10.24432/C5MK6M}{UCI 270},
\citealp{VergaraA2012sensors,RodriguezLujanI2014chemolab}) has $13910$
observations and six classes. Batches 1--7 form the source and batches
8--10 are separate targets. Communities and Crime
(\href{https://doi.org/10.24432/C53W3X}{UCI 183}, \citealp{RedmondM2002ejor}) has $1994$ communities
and a response on $[0,1]$.
Each Census region serves as the target in turn, with the other three regions forming the source.

In both studies, we use extremely randomized trees \citep{GeurtsP2006mlj},
with APS scores for Gas and a truncated-normal predictive density for
Crime. Neither predictor nor tilt is assumed correctly specified.
Target folds separate fitting from evaluation. We use eight randomized
source partitions, aggregating target folds by test count within each
partition. SEs describe variation across partitions of fixed datasets,
rather than newly sampled regions or sensor systems.

\begin{figure}[ht!]
\centering
\begin{tikzpicture}
\begin{axis}[resultaxis,title={Gas coverage},ylabel={Coverage},
 xmin=7.7,xmax=10.3,xtick={8,9,10},xlabel={Target batch},
 ymin=.70,ymax=1.00,ytick={.70,.80,.90,1.00},
 legend columns=3,legend to name=reallegend]
\addplot[cpplot,error bars/.cd,y dir=both,y explicit] coordinates {(7.94,0.968112) +- (0,0.001922) (8.94,0.922074) +- (0,0.009395) (9.94,0.923056) +- (0,0.001283)};
\addlegendentry{Standard CP}
\addplot[wcpplot,error bars/.cd,y dir=both,y explicit] coordinates {(8.00,0.966837) +- (0,0.002007) (9.00,0.910638) +- (0,0.010116) (10.00,0.913368) +- (0,0.001580)};
\addlegendentry{ExTRA-WCP}
\addplot[tiltplot,error bars/.cd,y dir=both,y explicit] coordinates {(8.06,0.930697) +- (0,0.009998) (9.06,0.739628) +- (0,0.009282) (10.06,0.829826) +- (0,0.002777)};
\addlegendentry{ExTRA-WCP-T}
\addplot[black!65,dashed,no marks,forget plot] coordinates {(7.7,.9)(10.3,.9)};
\end{axis}
\end{tikzpicture}\hfill
\begin{tikzpicture}
\begin{axis}[resultaxis,title={Gas set size},ylabel={Mean set size},
 xmin=7.7,xmax=10.3,xtick={8,9,10},xlabel={Target batch},ymin=1.6,ymax=2.8]
\addplot[cpplot,error bars/.cd,y dir=both,y explicit] coordinates {(7.94,2.127976) +- (0,0.025184) (8.94,2.414894) +- (0,0.021590) (9.94,2.667743) +- (0,0.010579)};
\addplot[wcpplot,error bars/.cd,y dir=both,y explicit] coordinates {(8.00,2.054422) +- (0,0.014068) (9.00,2.323404) +- (0,0.023511) (10.00,2.578507) +- (0,0.012482)};
\addplot[tiltplot,error bars/.cd,y dir=both,y explicit] coordinates {(8.06,1.786990) +- (0,0.028933) (9.06,1.870479) +- (0,0.032493) (10.06,2.332361) +- (0,0.020397)};
\end{axis}
\end{tikzpicture}
\par\smallskip
\begin{tikzpicture}
\begin{axis}[resultaxis,title={Crime coverage},ylabel={Coverage},
 xmin=-.3,xmax=3.3,xtick={0,1,2,3},
 xticklabels={Northeast,Midwest,South,West},
 ymin=.70,ymax=1.00,ytick={.70,.80,.90,1.00}]
\addplot[cpplot,error bars/.cd,y dir=both,y explicit] coordinates {(-0.06,0.955763) +- (0,0.005186) (0.94,0.937911) +- (0,0.005105) (1.94,0.721955) +- (0,0.008897) (2.94,0.888333) +- (0,0.003164)};
\addplot[wcpplot,error bars/.cd,y dir=both,y explicit] coordinates {(0.00,0.964692) +- (0,0.005596) (1.00,0.952303) +- (0,0.005832) (2.00,0.802684) +- (0,0.015533) (3.00,0.903056) +- (0,0.005893)};
\addplot[tiltplot,error bars/.cd,y dir=both,y explicit] coordinates {(0.06,0.909700) +- (0,0.013944) (1.06,0.916530) +- (0,0.006307) (2.06,0.782252) +- (0,0.014567) (3.06,0.900556) +- (0,0.004518)};
\addplot[black!65,dashed,no marks,forget plot] coordinates {(-.3,.9)(3.3,.9)};
\end{axis}
\end{tikzpicture}\hfill
\begin{tikzpicture}
\begin{axis}[resultaxis,title={Crime set length},ylabel={Mean total length},
 xmin=-.3,xmax=3.3,xtick={0,1,2,3},
 xticklabels={Northeast,Midwest,South,West},ymin=.32,ymax=.48]
\addplot[cpplot,error bars/.cd,y dir=both,y explicit] coordinates {(-0.06,0.420461) +- (0,0.008865) (0.94,0.407180) +- (0,0.006248) (1.94,0.346223) +- (0,0.008689) (2.94,0.426121) +- (0,0.005515)};
\addplot[wcpplot,error bars/.cd,y dir=both,y explicit] coordinates {(0.00,0.451294) +- (0,0.012606) (1.00,0.434771) +- (0,0.010302) (2.00,0.432637) +- (0,0.019389) (3.00,0.452318) +- (0,0.010277)};
\addplot[tiltplot,error bars/.cd,y dir=both,y explicit] coordinates {(0.06,0.453574) +- (0,0.009178) (1.06,0.442974) +- (0,0.009723) (2.06,0.414726) +- (0,0.015207) (3.06,0.440232) +- (0,0.005972)};
\end{axis}
\end{tikzpicture}
\par\smallskip\ref{reallegend}
\caption{Gas classification and Crime regression at nominal coverage
$0.90$ (dashed). Points show means $\pm$ one SE across eight source
partitions of each fixed dataset. Horizontal offsets separate methods,
and lines connect discrete target groups. Tilting gives smaller Gas
sets but substantial undercoverage in batches 9 and 10. Crime shows
no consistent length reduction, and all methods under-cover in South.}
\label{fig:real-data}
\end{figure}

On Gas, tilting reduces set size in every batch, but lowers coverage
from $0.911$ to $0.740$ in batch 9 and from $0.913$ to $0.830$ in batch 10
(Figure~\ref{fig:real-data}). On Crime, weighting expands sets and
improves coverage in West and South, although all methods under-cover
in South. Tilting lowers coverage without shortening sets in Northeast
and Midwest. West's paired length reduction is $2.52\%$ at similar
coverage, with interval $[-0.04\%,5.08\%]$. Neither dataset shows
a consistent benefit from tilting.

\section{Discussion and conclusion}
\label{sec:discussion}

Weighting calibration and reshaping predictions have different effects.
In synthetic regression with correctly specified models and informative
target inputs, tilting gives substantially shorter sets at near-nominal
coverage. The separable study shows this with either ratio estimator.
The interaction study shows why representing the shift also matters. Classification, weak-signal regression, and real data show
that the same adjustment can lose coverage. The common TV bound does
not order scores. Actual coverage depends on where ratio errors and
score misses coincide, as well as surplus coverage under the surrogate target.

Target inputs can identify a ratio only within an assumed family.
Our calculations distinguish an identifiable response--noise interaction
from an indistinguishable interaction with the source mode signal.
A good input fit cannot validate restrictions excluding the latter.
Choosing tilting reliably without target responses remains open.
Robust conformal methods protect ranges of targets
\citep{CauchoisM2024jasa,AiJ2024icml}, while PAC prediction sets account
for estimated weights under covariate or label shift
\citep{ParkSD2022iclr,SiW2024iclr}. Extending such protection to ExTRA
requires uncertainty sets that cover estimation error and unresolved
shifts, whose extent target inputs alone generally cannot determine.

\bibliographystyle{abbrvnat}
\bibliography{sjc}

\clearpage
\appendix
\section{Identification in the regression experiments}
\label{app:identification}

The following calculations explain which response shifts target
inputs can distinguish in our regression experiments. The response
enters the tilt through $Z=\operatorname{sign}(Y)$, whose source
probability depends on $V$. We first allow the response shift to
interact with $U$, then show how an interaction with $V$ creates
an exact ambiguity. The distinction concerns the chosen shift
family, even when the source distribution is known.

\medskip
\begin{proposition}[Identification with a response interaction in $U$]
\label{prop:separated-identification}
Fix a source distribution $P$. Suppose $(U,V)$ is supported on
$[-1,1]^2$ with a density positive almost everywhere on $(-1,1)^2$,
$P(Y=0)=0$, and
\[
 P(Y>0\mid U=u,V=v)=\pi(v),
\]
where $0<\pi(v)<1$ and $\pi$ is not almost everywhere constant.
Within the family
\[
 h^U_{a,b,c}(u,v,y)
 =\exp\{au+(b+cu)\operatorname{sign}(y)\},
\]
the target input marginal uniquely determines $(a,b,c)$.
This includes the separable family obtained by setting $c=0$.
\end{proposition}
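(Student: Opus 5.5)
By \eqref{eq:induced-input-ratio}, two parameter vectors induce the same target input marginal exactly when their conditional moments are proportional. We therefore compute $M_{a,b,c}$ explicitly and show that proportionality forces equal coefficients. Since $P(Y=0)=0$ and $\operatorname{sign}(Y)$ given $(U,V)=(u,v)$ is $+1$ with probability $\pi(v)$,
\[
 M_{a,b,c}(u,v)=e^{au}\bigl\{\pi(v)e^{b+cu}+(1-\pi(v))e^{-(b+cu)}\bigr\}.
\]
Suppose $(a,b,c)$ and $(a',b',c')$ induce the same input marginal. Then $M_{a,b,c}=C\,M_{a',b',c'}$ for $P_X$-almost every $(u,v)$, with a constant $C>0$. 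Because the density is positive almost everywhere on $(-1,1)^2$, this holds for Lebesgue-almost every $(u,v)$ in the square.

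Write $s=b+cu$ and $s'=b'+c'u$. Dividing the identity by $1-\pi(v)>0$ and setting $\rho(v)=\pi(v)/(1-\pi(v))$ gives, for almost every $(u,v)$,
\[
 e^{au}\{\rho(v)e^{s}+e^{-s}\}=C e^{a'u}\{\rho(v)e^{s'}+e^{-s'}\}.
\]
This is affine in $\rho(v)$, with coefficients depending only on $u$:
\[
 \rho(v)\bigl(e^{au+s}-Ce^{a'u+s'}\bigr)+\bigl(e^{au-s}-Ce^{a'u-s'}\bigr)=0.
\]
We need a Fubini step here. For almost every $u$, the identity holds for almost every $v$. Since $\pi$ is not almost everywhere constant, $\rho$ takes at least two distinct values on sets of positive measure in $v$. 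An affine function of $\rho$ that vanishes at two distinct values of $\rho$ must have both coefficients zero. Hence, for almost every $u\in(-1,1)$,
\[
 e^{au+s}=Ce^{a'u+s'},\qquad e^{au-s}=Ce^{a'u-s'}.
\]

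Take logarithms, then add and subtract the two equations:
\[
 2au=2\log C+2a'u,\qquad 2(b+cu)=2(b'+c'u),
\]
for almost every $u$ in $(-1,1)$. Affine functions of $u$ that agree on a set of positive measure are identical. So $a=a'$ with $\log C=0$, $b=b'$, and $c=c'$. The separable case $c=c'=0$ is covered by the same argument. Under this parametrization equal parameters trivially give equal ratios, so the conclusion can be read either way: identification of $w$ in the sense of \eqref{eq:identification-condition}, or uniqueness of $(a,b,c)$.

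The main obstacle is the measure-theoretic step. We must pass from an almost-everywhere identity on the square to the coefficient vanishing for almost every $u$. Nonconstancy of $\pi$ has to give two distinct $\rho$-values in the $v$-section for almost every $u$. This works because $\pi$ depends on $v$ alone, so the set of good $v$ is the same for every $u$. Fubini then applies to the exceptional null set. No separate argument about $\mathcal B$ is needed, since every tilt in this family is bounded on the compact support, which gives $0<Z_P<\infty$.
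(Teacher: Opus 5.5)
Your proof is correct and follows the paper's argument: compute $M_{a,b,c}$, use proportionality with Fubini to get the identity for almost every $v$ at almost every $u$, and match the coefficients of $1$ and $\pi(v)$ (your $\rho$-reparametrization is equivalent) to obtain the two exponential equations. Solving those equations then gives $b+cu=b'+c'u$, $a=a'$ and $C=1$, just as in the paper. One slip is cosmetic: the exceptional $v$-set does depend on $u$, contrary to your closing remark, but the argument only needs $\pi$ to be nonconstant off that null set, which holds for each such $u$.
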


\begin{proof}
Write $t(u)=b+cu$. Averaging over the two response modes gives
\[
 M^U_{a,b,c}(u,v)
 =e^{au}\{[1-\pi(v)]e^{-t(u)}+\pi(v)e^{t(u)}\}.
\]
Suppose two parameter triples induce the same normalized input
marginal. Their conditional moments then satisfy
\[
 M^U_{a,b,c}(u,v)
 =C M^U_{a',b',c'}(u,v)
\]
almost everywhere, for some constant $C>0$.
The positive input density and Fubini's theorem imply that,
for almost every $u$, this equality holds for almost every $v$.
Since $1$ and $\pi(v)$ are linearly independent, matching their
coefficients gives
\[
 e^{au-t(u)}=C e^{a'u-t'(u)},
 \qquad
 e^{au+t(u)}=C e^{a'u+t'(u)},
\]
where $t'(u)=b'+c'u$.
Dividing the second equality by the first yields
$e^{2t(u)}=e^{2t'(u)}$, hence $t(u)=t'(u)$ almost everywhere.
Therefore $b=b'$ and $c=c'$.
Substitution gives $e^{(a-a')u}=C$ almost everywhere, which
requires $a=a'$ and $C=1$.
All normalizers are finite and positive because the inputs
and the response sign are bounded.
\end{proof}

For the source model in \eqref{eq:adaptation-source},
$\pi_\eta(v)=\operatorname{logit}^{-1}(d_\eta+3\eta v)$
is nonconstant whenever $\eta>0$, so the proposition applies.
As $\eta\downarrow0$, the intercept adjustment preserving $p_+$
gives
\[
 d_\eta\to\operatorname{logit}(p_+),
 \qquad
 \pi_\eta(v)\to p_+
\]
uniformly in $v$. In the separable family, $c=0$, the normalized
input ratio consequently converges to
\[
 \frac{e^{au}}{\E_P e^{aU}},
\]
which does not depend on $b$.
At $\eta=0$, target inputs therefore contain no information
about the separable response-shift coefficient, as shown in
\eqref{eq:hidden-mode-marginal}. This limit explains why the
sensitivity study varies $\eta$.

A different ambiguity arises when the response shift interacts
with $V$, the input that controls source mode probabilities.

\medskip
\begin{proposition}[Ambiguity with a response interaction in $V$]
\label{prop:learned-ambiguity}
Under the source model in \eqref{eq:adaptation-source}, consider
\[
 h^V_{a,b,c}(u,v,y)
 =\exp\{au+(b+cv)\operatorname{sign}(y)\}.
\]
The parameter triples
\[
 (a,b,c)
 \quad\text{and}\quad
 (a,-d_\eta-b,-3\eta-c)
\]
induce the same target input marginal. Their joint ratios differ
unless $b=-d_\eta/2$ and $c=-3\eta/2$.
At each input, their conditional positive-mode probabilities
sum to one.
\end{proposition}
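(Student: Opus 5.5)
The plan is to compute the conditional moment directly, as in the proof of Proposition~\ref{prop:separated-identification}, and exhibit an exact proportionality. Write $\pi(v)=\operatorname{logit}^{-1}(d_\eta+3\eta v)$ and $s(v)=b+cv$. Under \eqref{eq:adaptation-source} the mode depends only on $v$, so
\[
 M^V_{a,b,c}(u,v)=e^{au}\{[1-\pi(v)]e^{-s(v)}+\pi(v)e^{s(v)}\}.
\]
The key identity is $\pi(v)/[1-\pi(v)]=e^{g(v)}$ with $g(v)=d_\eta+3\eta v$. Factoring out $1-\pi(v)$ gives
\[
 M^V_{a,b,c}=e^{au}[1-\pi(v)]\{e^{-s(v)}+e^{g(v)+s(v)}\}.
\]
The second triple has $s'(v)=-d_\eta-b-(3\eta+c)v=-g(v)-s(v)$. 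Substituting yields
\[
 e^{-s'}+e^{g+s'}=e^{g+s}+e^{-s},
\]
so the two conditional moments coincide exactly, with $C=1$. Hence the normalizers $Z_P=\E_{P_X}M$ also agree; they are finite and positive because the inputs and the sign are bounded. By \eqref{eq:induced-input-ratio}, the induced input marginals are therefore equal.

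Next, compare the joint ratios. Since $Z_P$ is common, $w$ and $w'$ agree $P$-almost surely iff $h^V_{a,b,c}=h^V_{a,-d_\eta-b,-3\eta-c}$ almost surely. This requires $s(v)z=s'(v)z$ for both signs, which occur with positive probability at each $v$ because $0<\pi<1$. So we need $s=s'$ almost everywhere in $v$, that is, $2s(v)=-g(v)$ as affine functions on a set of positive measure. Matching coefficients gives $b=-d_\eta/2$ and $c=-3\eta/2$. Conversely, these values make the triples identical.

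For the last claim, use \eqref{eq:tilt-induced-conditional}. The tilted positive-mode probability is
\[
 \pi(v)e^{s}/\{[1-\pi]e^{-s}+\pi e^{s}\}=e^{g+s}/(e^{-s}+e^{g+s})=\operatorname{logit}^{-1}(g+2s).
\]
For the second triple, $g+2s'=-g-2s$, so its positive-mode probability is $\operatorname{logit}^{-1}(-(g+2s))=1-\operatorname{logit}^{-1}(g+2s)$. The two probabilities therefore sum to one.

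The only delicate step is the almost-sure comparison of the joint ratios. There I would argue via positivity of the input density and both signs carrying positive mass, so that equality of $h$ forces equality of the affine exponents. The remaining steps are routine algebra once the log-odds substitution is made.
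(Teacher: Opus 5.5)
Your proof is correct and follows the paper's argument. The paper writes the conditional moment as $e^{au}\cosh\{d_\eta/2+b+(3\eta/2+c)v\}/\cosh\{d_\eta/2+(3\eta/2)v\}$ and uses that $\cosh$ is even, which is your identity in disguise, since $e^{-s}+e^{g+s}=2e^{g/2}\cosh(s+g/2)$ and $s\mapsto -g-s$ negates $s+g/2$. It then compares the joint ratios and derives the complementary positive-mode probabilities $\operatorname{logit}^{-1}(g+2s)$ exactly as you do.
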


\begin{proof}
Averaging over the source response modes gives
\[
 \begin{aligned}
 M^V_{a,b,c}(u,v)
 &=e^{au}\{[1-\pi_\eta(v)]e^{-(b+cv)}
                  +\pi_\eta(v)e^{b+cv}\}\\
 &=e^{au}
 \frac{\cosh\{d_\eta/2+b+(3\eta/2+c)v\}}
      {\cosh\{d_\eta/2+(3\eta/2)v\}}.
 \end{aligned}
\]
The stated parameter transformation negates the argument of
the numerator's hyperbolic cosine. Since $\cosh$ is even,
the conditional moment and its source expectation remain
unchanged. The normalized input marginals therefore coincide.

Because the normalizers agree, equality of the joint ratios
would require equality of the tilt functions. Both response
signs have positive conditional probability, so this requires
\[
 2b+d_\eta+(2c+3\eta)v=0
\]
almost everywhere. This occurs only when
$b=-d_\eta/2$ and $c=-3\eta/2$.

Finally, the target positive-mode probability is
\[
 Q^V_{a,b,c}(Z=+1\mid u,v)
 =\operatorname{logit}^{-1}
   \{d_\eta+2b+(3\eta+2c)v\}.
\]
The transformation negates this logit, so the two probabilities
are complementary.
\end{proof}

For a concrete example, take $\eta=1$, for which $d_1=-2$.
Within the $V$-interaction family, the triples $(1,1.2,0)$
and $(1,0.8,-3)$ give identical input marginals but different
joint ratios. The separable and $U$-interaction families used
in the experiments both exclude the second shift by omitting
$V\operatorname{sign}(Y)$.

Thus, the identification result depends on this feature
restriction. Enlarging the family to allow both interactions
introduces indistinguishable joint shifts. Matching the target
input distribution cannot establish that the excluded
interaction is absent.

\section{Essential experimental details}
\label{app:implementation}

\paragraph{Synthetic fitting.}
ExTRA uses the ridge penalty $\lambda\|\beta\|_2^2/2$ in
\eqref{eq:extra-objective} and L-BFGS-B with coefficient bounds.
Classification and real-data fits start from zero. Synthetic
regression uses the multiple starts specified below.

The classification predictor is multinomial logistic regression
with inverse regularization strength $C=100$.
The generating tilt has class slopes $(0.70,-0.15)$,
$(-0.40,0.55)$, $(0.20,-0.60)$ and intercepts $(0.40,-0.30,0)$.
ExTRA uses $\lambda=0.01$, slopes in $[-1.5,1.5]$, and the first
two intercepts in $[-3.5,3.5]$, with the last fixed at zero.
The target component means are $(-1.3,-0.15)$, $(1.6,0.55)$,
and $(0.2,1.9)$. The slope bounds exclude alternative component
assignments, so Gaussian mixture identification up to permutation
identifies the joint ratio at this target.

Regression uses logistic mode probabilities on $(u,v)$ with
$C=100$, per-sign Gaussian locations linear in $(1,u,v)$, and
a shared log scale linear in $(1,u)$.
The six location and two scale coefficients are fitted jointly
by truncated-normal maximum likelihood.
Bounds are $[-3,3]$ for location intercepts, $[-1,1]$ for location
slopes, $[-4,0]$ for the log-scale intercept, and $[-1.5,1.5]$
for its slope.
Separable ExTRA uses $a\in[-1.5,1.5]$, $b\in[-3,3]$, and starts
$(0,0)$ and $(0,\pm1.5)$, retaining the converged solution with
the largest objective.
The main ridge coefficient is $1/n_{\mathrm{shift}}$.
The regularization check uses
$\sqrt{1/n_{\mathrm{shift}}+1/m}$.
Both were fixed after development pilots and before final evaluation.
All replications yield a converged fit. Six fits with the smaller
penalty at $\eta=0$ reach $b=3$.
Convergence does not certify a global maximum.
Input-only WCP retains the source score and uses
$\widehat M_{\widehat\beta}(x)$ for calibration and candidate weights.

The interaction experiment adds $c\in[-3,3]$ and uses starts
$(0,0,0)$, $(0,\pm1.5,0)$, and $(0,0,\pm1.5)$.
All starts converge, with no selected fit at a coefficient bound
and no discarded replications.
Target inputs are sampled by rejection using $M^U_{a,b,c}(u,v)$.
Conditional positive-mode probabilities are
$\operatorname{logit}^{-1}\{-2+3v+2(b+cu)\}$, with unchanged
within-mode response distributions.
Source fitting, sample sizes, and penalties match the separable
experiment, with independent random seeds.
The unchanged two-stage estimator is misspecified here.

For identification sensitivity, solving
$\E_V\operatorname{logit}^{-1}(d_\eta+3\eta V)=p_+$
gives $d_\eta\simeq-2.000,-1.481,-1.331,-1.279$ at
$\eta=1,0.5,0.25,0$, respectively.
These values generate data only. The fitted logistic model
estimates its intercept and both slopes.
Fitting choices remain fixed.
The $\eta=1$ results reuse the separable setting
$(a^\star,b^\star)=(1,1.2)$. Other levels use independent seeds.

\paragraph{Response-set inversion.}
\label{app:inversion}
Synthetic regression sets are inverted analytically over both
complete response half-lines.
Conditional on calibration and the fitted functions, write
$V_i=S(X_i,Y_i)$, $W=\sum_iH_i$, and let $H_z$ be the candidate
weight on $zy>0$.
Define
\begin{equation}
 q_{S,z}(x)
 =\inf\left\{
 t:\sum_iH_i\ind\{V_i\leq t\}
       \geq(1-\alpha)(W+H_z)
 \right\},
 \label{eq:mode-threshold}
\end{equation}
with $\inf\varnothing=+\infty$.
On this half-line, the weighted-rank rule is equivalent to
$S(x,y)\leq q_{S,z}(x)$.

For
\[
 S(x,y)=c_{S,z}(x)
       +\frac{(y-\widehat\mu_z(x))^2}
              {2\widehat\sigma_z^2(x)},
\]
a finite threshold $q_{S,z}\geq c_{S,z}$ gives
\[
 \begin{aligned}
 r_{S,z}(x)
 &=\widehat\sigma_z(x)\sqrt{2(q_{S,z}(x)-c_{S,z}(x))},\\
 I_{S,z}(x)
 &=[\widehat\mu_z(x)-r_{S,z}(x),
    \widehat\mu_z(x)+r_{S,z}(x)]\cap\{y:zy>0\}.
 \end{aligned}
\]
The region is empty if $q_{S,z}<c_{S,z}$ and is the entire
half-line if $q_{S,z}=+\infty$.
For a nonempty intersection, let $L_{S,z}$ and $U_{S,z}$ be
its endpoints. If $F_z(\cdot\mid x)$ is the true source
within-mode CDF, then
\begin{equation}
 m_{S,z}(x\mid\Dcal)
 =1-\{F_z(U_{S,z}(x)\mid x)-F_z(L_{S,z}(x)\mid x)\}.
 \label{eq:mode-miss}
\end{equation}
An empty interval has included mass zero.
Endpoint conventions are immaterial under the continuous model.
Averaging over calibration gives $m_{S,z}(x)$.
No tails are trimmed, all evaluated lengths are finite,
and no infinite lengths are discarded.

\paragraph{Two-stage estimation.}
\label{app:two-stage}
The separable regression shift in \eqref{eq:adaptation-tilt}
has a factor $e^{au}$ that reweights the input $U$ and a factor
$e^{bz}$ that changes the frequency of the response mode
$Z=\operatorname{sign}(Y)$. The two-stage baseline estimates
these changes separately. We include it to assess whether the
benefits of weighting and predictive tilting depend on ExTRA's
particular ratio estimator. Its fitted tilt is used in the same
WCP and WCP-T procedures, with the same source predictor,
calibration sample, and evaluation data.

\emph{First, estimate the change in $U$.}
Because $U$ and $Z$ are independent under the source,
the target marginal of $U$ depends on $a$ alone.
We estimate $a$ by matching this marginal, minimizing
\[
 \log\left\{
 n_{\mathrm{shift}}^{-1}\sum_i e^{aU_i^P}
 \right\}
 -a\overline U_Q+\frac{a^2}{2n_{\mathrm{shift}}}
\]
over $[-1.5,1.5]$, where $\overline U_Q$ is the target sample
mean of $U$. The last term is a ridge penalty.
The weights $r_i=e^{\widehat aU_i^P}$ then adjust the source
shift sample for this estimated covariate change.

\emph{Next, estimate the frequency of positive responses.}
After the true covariate adjustment, the remaining factor
$e^{bz}$ changes only the mode frequencies, preserving the
input distribution within each mode. This is label shift on
the binary variable $Z$, whose source labels are observed
as response signs. Soft black-box shift estimation (BBSE)
\citep{LiptonZ2018icml} uses this structure to infer the target
positive-mode probability from target inputs alone.

Let $g(x)=\widehat P(Z=+1\mid x)$ be the shared fitted logistic
predictor and $z_i=\operatorname{sign}(Y_i^P)$. Compute
\[
 \widehat p_a=\frac{\sum_i r_i\ind\{z_i=+1\}}{\sum_i r_i},
 \qquad
 \widehat c_z=\frac{\sum_{i:z_i=z}r_i g(X_i^P)}
                   {\sum_{i:z_i=z}r_i},
 \qquad
 \overline g_Q=\frac1m\sum_jg(X_j^Q).
\]
Here $\widehat p_a$ is the positive-mode frequency after
covariate reweighting, $\widehat c_z$ is the corresponding
mean predictor output within mode $z$, and $\overline g_Q$
is its mean on target inputs. If the target positive-mode
probability is $q_+$, its mean output is a mixture of the two
within-mode means. BBSE therefore matches
\[
 \overline g_Q
 =(1-q_+)\widehat c_-+q_+\widehat c_+,
 \qquad
 \widetilde q_+
 =\frac{\overline g_Q-\widehat c_-}
        {\widehat c_+-\widehat c_-}.
\]
Thus, a change in the average predictor output reveals a
change in mode frequency whenever the two within-mode means
are distinct. Target responses are not needed.

The response tilt multiplies positive-mode mass by $e^b$ and
negative-mode mass by $e^{-b}$, increasing the log odds by
$2b$. To impose the same bound $|b|\leq3$ as ExTRA, we clip
$\widetilde q_+$ to $[q_{\min},q_{\max}]$, where
\[
 q_{\min}=\operatorname{logit}^{-1}
          \{\operatorname{logit}(\widehat p_a)-6\},
 \qquad
 q_{\max}=\operatorname{logit}^{-1}
          \{\operatorname{logit}(\widehat p_a)+6\},
\]
and convert the clipped estimate $\widehat q_+$ to
\[
 \widehat b
 =\{\operatorname{logit}(\widehat q_+)
    -\operatorname{logit}(\widehat p_a)\}/2.
\]
The resulting tilt is $e^{\widehat a u+\widehat b z}$.
A gap $|\widehat c_+-\widehat c_-|<10^{-12}$ would select
$\widehat b=0$. No fitted gap is this small.
Clipping occurs in $2/30$ sensitivity fits at $\eta=0.25$
and $13/30$ at $\eta=0$, and nowhere else.
All fits are retained.

At the population level, with the true covariate adjustment,
the mixture equation holds for any fixed predictor $g$.
Estimating the mode frequency requires distinct within-mode means. At $\eta=0$, those means coincide because the inputs
carry no information about $Z$. The construction estimates
a binary mode prior within this regression model, rather
than applying BBSE directly to a continuous response.
The same baseline is also evaluated in the interaction study,
where its separable tilt cannot represent the input-dependent
response coefficient $b+cu$.

\paragraph{Population ratio diagnostics.}
\label{app:ratio-diagnostics}
TV diagnostics normalize each fitted tilt by its true source
expectation $\E_P h_{\widehat\beta}$, using neither the learned
conditional distribution nor the empirical fitting normalizer.
For separable regression, $U$ and $Z$ are source-independent.
Writing $p_z=P(Z=z)$, the normalizer and replication-specific
TV error are
\[
 Z(a,b)=\frac{\sinh a}{a}(p_+e^b+p_-e^{-b}),
\]
\[
 d_r=\frac14\sum_{z=\pm1}p_z\int_{-1}^1
 \left|
 \frac{e^{a^\star u+b^\star z}}{Z(a^\star,b^\star)}
 -
 \frac{e^{\widehat a_r u+\widehat b_r z}}
      {Z(\widehat a_r,\widehat b_r)}
 \right|\,du,
\]
with $\sinh(a)/a=1$ at $a=0$.
Quadrature splits at crossings and uses absolute and relative
tolerances of $10^{-11}$.
The formula applies at every $\eta$ because $p_+$ is held fixed.

For the interaction family, replace each mode's slope $a$
by $a+cz$. Its normalizer is
\[
 Z(a,b,c)=\sum_{z=\pm1}p_z e^{bz}
                   \frac{\sinh(a+cz)}{a+cz},
\]
with the ratio interpreted as one when $a+cz=0$.
The same one-dimensional quadrature evaluates full and
separable fits against the true interaction ratio.

For classification, tilting maps source component
$\mathcal N(\mu_k,I)$ to $\mathcal N(\mu_k+\theta_k,I)$,
with class mass proportional to
$\pi_k\exp\{\alpha_k+\theta_k^\top\mu_k+\|\theta_k\|^2/2\}$.
Here $\pi_k=P(Y=k)$ and $\alpha_k$ is the class intercept.
Integrating the positive difference between true and fitted
joint densities within each class reduces to normal CDF
evaluations. Summing over classes gives TV analytically.

\paragraph{Real-data fitting.}
Source partitions allocate $40\%$ to training, $30\%$ to
shift estimation, and $30\%$ to calibration.
Gas partitions are stratified by class.
Its ExtraTrees classifier uses $320$ trees, minimum leaf
size three, square root feature subsampling, equal observation
weights, and no class reweighting.
Tilt features are an intercept and five source principal-component
scores transformed by $\tanh(t/2)$, with class interactions.
All sixth-class coefficients are fixed at zero, a restriction
beyond removing the common intercept.
The remaining $30$ coefficients lie in $[-0.5,0.5]$,
with ridge coefficient $0.10$.
Five target folds separate fitting from evaluation.
APS uniforms are independent across observations and candidate
classes, shared between methods, and fixed during candidate evaluation.

Crime uses $210$ trees, minimum leaf size four, and square root
feature subsampling.
The training split is divided $80/20$ between fitting the
predictive mean and a shared truncated-normal scale, constrained
to $[0.02,0.5]$.
State and community identifiers are excluded.
Numeric features with at most $5\%$ missingness are selected
using the mean-training split. Imputation, standardization,
and PCA are also fitted on that split.

For $r=(y-\widehat\mu(x))/\widehat\sigma$, response features are
$\tanh(r/2)$ and $\exp\{-(r-c)^2/(2\cdot0.8^2)\}$,
$c\in\{-1,0,1\}$.
Their products with an intercept and two source principal-component
scores transformed by $\tanh(t/2)$ give twelve coefficients
in $[-0.35,0.35]$, with ridge coefficient $0.50$.
Moments use $64$-point Gauss--Legendre quadrature.
Three target folds separate fitting from evaluation.
Length uses trapezoidal integration on $801$ points across
$[0,1]$. Coverage is evaluated at the observed response.
These fixed datasets and partitioning schemes do not exactly
follow the independent population sampling model of the
coverage theorem.

\paragraph{Evaluation summaries.}
Means and SEs summarize replication- or partition-level
coverage and mean set size.
SEs are sample SDs divided by the square root of the count.
Paired percentage length reductions average
$100(1-\ell_r^{\mathrm{comparison}}/\ell_r^{\mathrm{baseline}})$,
where $\ell_r$ is mean total length in replication or partition $r$.
WCP is the baseline for comparisons with WCP-T.
Paired Student's $t$ intervals use $29$ degrees of freedom
for synthetic experiments and seven for real-data comparisons.

\end{document}